\DeclareMathOperator*{\argmin}{argmin}
\DeclareMathOperator*{\argmax}{argmax}
\newtheorem{theorem}{Theorem}[section]
\newtheorem{proposition}[theorem]{Proposition}
\newcommand{\blind}{0}
\begin{document}
	
\newif\ifNotAnonymous
\NotAnonymoustrue

\newif\ifTwoColumn
\TwoColumnfalse

\def\spacingset#1{\renewcommand{\baselinestretch}%
	{#1}\small\normalsize} \spacingset{1}

\if0\blind
{
	\title{\bf Latent Gaussian Model Boosting}
	
	\author{Fabio Sigrist\thanks{Email: fabio.sigrist@hslu.ch. Address: Lucerne University of Applied Sciences and Arts, Suurstoffi 1, 6343 Rotkreuz, Switzerland.}\\
		Lucerne University of Applied Sciences and Arts}
	\maketitle
}
\fi

\bigskip

\spacingset{1} 

\bibliographystyle{abbrvnat}

\begin{abstract}
	Latent Gaussian models and boosting are widely used techniques in statistics and machine learning. Tree-boosting shows excellent prediction accuracy on many data sets, but potential drawbacks are that it assumes conditional independence of samples, produces discontinuous predictions for, e.g., spatial data, and it can have difficulty with high-cardinality categorical variables. Latent Gaussian models, such as Gaussian process and grouped random effects models, are flexible prior models which explicitly model dependence among samples and which allow for efficient learning of predictor functions and for making probabilistic predictions. However, existing latent Gaussian models usually assume either a zero or a linear prior mean function which can be an unrealistic assumption. This article introduces a novel approach that combines boosting and latent Gaussian models to remedy the above-mentioned drawbacks and to leverage the advantages of both techniques. We obtain increased prediction accuracy compared to existing approaches in both simulated and real-world data experiments.
\end{abstract}

\section{Introduction}
Boosting \citep{freund1996experiments, friedman2001greedy} is a machine learning technique that achieves state-of-the-art prediction accuracy \citep{chen2016xgboost, shwartz2021tabular}. This is reflected in statements such as ``[i]n general `boosted decision trees' is regarded as the most effective off-the-shelf non-linear learning method for a wide range of application problems" \citep{johnson2013learning}. In boosting, and in many other supervised machine learning algorithms, it is assumed that a potentially complex predictor function $F(\cdot)$ relates a set of predictor variables to a response variable, and that conditional on $F(\cdot)$ evaluated at the predictor variables, different samples are independent. Apart from this potentially unrealistic independence assumption, tree-boosting can have difficulty with high-cardinality categorical variables, and it produces discontinuous predictions. The latter is often unrealistic for spatial and spatio-temporal data.

Latent Gaussian models are a broad class of flexible prior models in which, conditional on latent Gaussian variables, a response variable is assumed to follow a known parametric distribution, and parameters of this distribution are related to the latent Gaussian variables. Two widely known types of latent Gaussian models are Gaussian process \citep{williams2006gaussian} and grouped, or clustered, random effects models \citep{pinheiro2006mixed}. Gaussian process models are used for modeling, for instance, time series, spatial, and spatio-temporal data. Further, grouped random effects models are used for modeling data with a grouping structure. In particular, grouped random effects models can be seen as an approach for modeling categorical variables with possibly high-cardinality, as every categorical variable corresponds to a grouping and vice versa. Latent Gaussian models have the advantage that they are probabilistic models which allows for making probabilistic predictions. Besides, the explicit modeling of dependence allows for efficient learning of the predictor function $F(\cdot)$. A drawback of existing latent Gaussian models is that the prior mean is often assumed to be either zero or to be a linear function of predictor variables. Both the zero-mean and the linearity assumption can be unrealistic, and higher prediction accuracy can be obtained by relaxing these assumptions; see, e.g., our experiments in Sections \ref{simul} and \ref{data_appl}.

The goal of this article is to combine boosting and latent Gaussian models for non-Gaussian data distributions. Specifically, we consider a class of models where the response variable follows a known parametric distribution, and a parameter of this distribution is related to the sum of a non-parametric function and a latent Gaussian variable. We propose to model the predictor function $F(\cdot)$ by an ensemble of base learners, such as regression trees \citep{breiman1984classification}, learned in a stage-wise manner by doing functional gradient descent steps in a boosting framework, and the hyperparameters of the covariance structure of the latent Gaussian model are jointly estimated with the predictor function; see Section \ref{LaGaBoostsec} for more details. 

Our novel approach allows for relaxing both the independence assumption in boosting and the linearity assumption in latent Gaussian models in a flexible non-parametric way. Further, it allows for obtaining continuous, or smooth, predictions for predictor variables such as spatial coordinates while at the same time being able to capture non-linearities, discontinuities, and interactions for predictor variables for which this is desirable. In addition, the use of grouped random effects is as a way for dealing with high-cardinality categorical variables in tree-boosting. As we show in our experiments in Sections \ref{simul} and \ref{data_appl}, our novel approach leads to higher prediction accuracy compared to both existing boosting algorithms and linear latent Gaussian models.

\subsection{The View of Latent Gaussian Models as Priors and Regularizers}\label{priorsreg}
An algorithm for learning a predictor function $F(\cdot)$, which relates predictor variables to a response variable, should result in an estimate $\hat F(\cdot)$ that has both low bias and low variance. Intuitively, a low bias estimator $\hat F(\cdot)$, such as a flexible machine learning model, can have high variance if the complexity of the function $F(\cdot)$ is large relative to the sample size. Examples of data for which this can occur include, first, time series, spatial, and spatio-temporal data where the amount of variation over space and/or time is large relative to the sample size and, second, data with high-cardinality categorical variables where the number of categories is large relative to the sample size. 

Modern supervised machine learning approaches such as deep neural networks and tree-boosting typically have low bias but need to apply some form of regularization to avoid high variance in $\hat F(\cdot)$. General-purpose regularization options for boosting include early stopping, learning rate shrinkage, and restrictions on the base learners such as the depth of trees and the minimal number of samples per leaf. However, as we argue in this article, for applications involving, e.g., spatial data or high-cardinality categorical variables, it can be advantageous to apply problem-specific regularization which incorporates available prior knowledge instead of relying on agnostic general-purpose regularization.

Prior models such as latent Gaussian models which explicitly model residual dependence among data can be interpreted as applying a form of regularization. For instance, an important prior assumption of Gaussian processes is that observations that are close together in space and/or time, or any other feature that defines a Gaussian process, are ``more similar to each other than distant samples". For spatial data, this prior assumption is often referred to as Tobler's first law of geography \citep{tobler1970computer}. Such a prior model implies regularization in the sense that predictions for points that are close together are similar, and that the amount of similarity varies in a continuous, or potentially smooth, manner with distance. Further, heuristically, a prior assumption of grouped random effects models is that different group effects are similar to some degree, and deviations from a global average are stochastic and identically distributed. Crucially, important characteristics of a latent Gaussian model such as the speed at which the dependency decays over space and/or time, the smoothness, the amount of variation over space and/or categories, and thus the amount of regularization implied by the prior is characterized by hyperparameters which can be learned from data. Our proposed approach allows for incorporating this reasonable prior knowledge and thus for applying explicit data-specific regularization in boosting algorithms. 

Intuitively, we conjecture that the improvement in prediction accuracy of our novel approach over classical independent tree-boosting is the larger, the more categories a categorical variable has and the faster the covariance decays over space and/or time or, in other words, the higher the complexity of $F(\cdot)$ is compared to the sample size since appropriate regularization is more important in these cases. This hypothesis is confirmed in simulated experiments in Section \ref{simul_ext}.

\subsection{Related Work}
For Gaussian data, existing approaches for combining \ifNotAnonymous Gaussian process and \fi grouped random effects models with machine learning algorithms include \citet{hajjem2011mixed,sela2012re,fu2015unbiased,hajjem2014mixed}\ifNotAnonymous, \citet{sigrist2020gaussian}\fi, \citet{Griesbach2021}, \citet{rabinowicz2021trees}, and \citet{saha2021random}.

For non-Gaussian data, there exists little prior work on combining non-linear machine learning methods with latent Gaussian models. For the special case of grouped random effects, \citet{hajjem2017generalized}, \citet{fokkema2018detecting}, and \citet{speiser2020bimm} propose algorithms that use regression trees to model the function $F(\cdot)$. \citet{speiser2019bimm} and \citet{pellagatti2020generalized} extend these algorithms by replacing trees with random forests. However, all of these methods are heuristically motivated. In particular, it is unclear which objective functions these algorithms minimize \textendash $ $ they do not maximize a marginal or approximate marginal log-likelihood neither in a component-wise way nor using an EM algorithm \textendash $ $ and whether and to which values they converge.

A straightforward alternative to the use of Gaussian processes and grouped random effects is to simply include the variables that define the latent Gaussian model, such as spatial coordinates, time points, and categorical variables, in the deterministic predictor function $F(\cdot)$ of a statistical or machine learning model. A special example of this is the approach \citet{hothorn2010model} where splines are used to model spatial effects and ridge regression is used to model grouped random effects. However, while the adoption of splines avoids discontinuities in predictions, this approach has several drawbacks compared to using latent Gaussian models. First, the hyperparameters, and thus the amount of regularization or smoothing, cannot be learned from data and need to be chosen using, e.g., cross-validation and, second, since the base learners are deterministic, probabilistic predictions cannot be made. Further, splines have the disadvantage that they suffer from the so-called ``curse of dimensionality" when the dimension of the ``locations" is large and the locations are thus sparse in space. This approach can thus not be used in situations where Gaussian processes are applied to higher-dimensional non-spatial ``locations" as is often done in machine learning applications of Gaussian processes.

The linearity assumption in mixed effects models can also be relaxed by using splines or generalized additive models \citep{Hastie1986, wood2017generalized} for modeling the predictor function $F(\cdot)$; see, e.g., \citet{tutz2007boosting} and \citet{groll2012regularization}. However, one has to assume a certain functional form with only limited possibility for interaction effects for the predictor function by specifying, for instance, main and second-order interaction effects. In general, this can thus result in model misspecification.

\section{A Non-parametric Latent Gaussian Model}\label{model_def}
We assume that the response variable $y=(y_1,\dots,y_n)^T\in \mathbb{R}^n$ follows a parametric distribution which has a density $p(y|\mu,\xi)$ with respect to a sigma finite product measure with parameters $\mu\in \mathbb{R}^n$ and $\xi\in\Xi\subset \mathbb{R}^r$. The focus of this article is on non-Gaussian densities $p(y|\mu,\xi)$. If $p(y|\mu,\xi)$ is a Gaussian density, calculations simplify as the required marginalization can be done analytically\ifNotAnonymous; see \citet{sigrist2020gaussian}\fi. Examples of $p(y|\mu,\xi)$ include Bernoulli and Poisson densities for binary classification and Poisson regression. The parameter $\mu$ is related to the sum of a predictor function $F(\cdot)$ evaluated at predictor variables and a latent Gaussian variable:
\begin{equation}\label{regre_form}
\mu=F(X)+Zb,~~~~b\sim \mathcal{N}(0,\Sigma),
\end{equation}
where $F(X)$ is the row-wise evaluation of a function $F(\cdot):\mathbb{R}^p\rightarrow\mathbb{R}$, $F(X)=(F(X_1),\dots,F(X_n))^T$, and $X_{i}=(X_{i1}\dots,X_{ip})^T\in\mathbb{R}^{p}$ is the $i$-th row of $X\in\mathbb{R}^{n\times p}$ containing predictor variables for observation $i$, $i=1,\dots,n$. For notational simplicity, we assume that the distribution $p(y|\mu,\xi)$ is parameterized in a way such that $\mu\in\mathbb{R}^n$. Otherwise, if the support of $\mu$ is not $\mathbb{R}^n$, the model needs to be reparametrized using, e.g., a so-called link function. Further, we assume that conditional on $\mu$, the data is independent:
$$p(y|\mu,\xi)=\prod_{i=1}^n p(y_i|\mu_i,\xi).$$
Any additional, auxiliary or hyper-, parameters of the likelihood $p(y|\mu,\xi)$ are denoted by $\xi$. In many situations such as classification and Poisson regression, there are no additional parameters.

We assume that $F(\cdot)$ is a function in a function space $\mathcal{H}$ that is the linear span of a set $\mathcal{S}$ of so-called base learners $f_j(\cdot):\mathbb{R}^p\rightarrow \mathbb{R}$. Classes of base learners include, e.g., linear functions \citep{buehlmann2006boosting}, smoothing splines \citep{buhlmann2003boosting}, wavelets \citep{saberian2011taylorboost}, \ifNotAnonymous reproducing kernel Hilbert space (RKHS) regression functions \citep{sigrist2019KTBoost}, \fi and regression trees \citep{breiman1984classification}, with the latter being the most popular choice. For defining functional derivatives, we additionally assume that the space $\mathcal{H}$ is normed. For instance, assuming that the $X_i$'s are identically distributed and that all $F\in\mathcal{H}$ are square integrable with respect to the law of $X_1$, a norm on $\mathcal{H}$ can defined by the inner product $\langle F,G\rangle = E_{X_1}(F(X_1)G(X_1))$ for $F,G\in \mathcal{H}$.

Examples of latent Gaussian variables  $b \in \mathbb{R}^m$ include finite-dimensional versions of Gaussian processes and/or grouped random effects. We assume that the covariance matrix $\text{Cov}(b)=\Sigma$ is parametrized by a set of parameters $\theta\in\Theta\subset \mathbb{R}^q$ whose dimensionality is often relatively low, and $\Sigma$ can depend on predictor variables $S\in \mathbb{R}^{n\times d}$. For instance, for spatial and temporal Gaussian processes, these predictor variables $S$ are locations and time points, respectively. For notational simplicity, we suppress the dependence of $\Sigma$ on its parameters $\theta$ and on $S$. Further, $Z\in\mathbb{R}^{n\times m}$ are predictor variables which relate the random variable $b$ to $\mu$. Often, $Z$ is simply an incidence matrix with entries in $\{0,1\}$. For instance, for grouped random effects, $Z$ consists of dummy variables that encode categorical variables. In general, $Z$ can also contain continuous predictor variables, e.g., in the case of random coefficient models \citep{gelfand2003spatial}. Note that, conditional on $F(X)$ and $Z$, dependence among the response variable $y$ can arise either due to the matrix $Z$ being non-diagonal or due to the covariance matrix $\Sigma$ being non-diagonal. 

In summary, we distinguish between three sets of predictor variables: $X$ with input variables for the predictor function $F(\cdot)$, $S$ which determines the covariance structure of the random variable $b$, and $Z$ which relates $b$ to $\mu$ and thus also determines the covariance structure of $\mu$ and $y$. Note that these three sets of predictor variables may or may not be over-lapping. If e.g., $X$ and $S$ contain disjoint sets of predictor variables, one assumes that there are no interactions among them. On the other hand, if, for instance, spatial locations in $S$ are also included in $X$, interactions among locations and other predictor variables in $X$ can be modeled.

In comparison to our approach, existing boosting algorithms, and many supervised machine learning algorithms in general, do not distinguish between the different types of predictor variables $X$, $S$, and $Z$, and one essentially has two options: either ignore the additional predictor variables in $S$ and $Z$ or include them in the set of predictor variables $X$ for the predictor function $F(\cdot)$. It goes without saying that the former is not a good option as potentially important information is neglected. Furthermore, the second option can result in the high variance problem mentioned in the introduction, and this translates into inferior prediction accuracy; see, e.g., our experiments in Sections \ref{simul} and \ref{data_appl}. Besides, existing boosting algorithms assume that the data $y$ is independent conditional on $F(X)$ and thus ignore any potential residual correlation. Further, in most latent Gaussian models, it is assumed that $F(\cdot)$ is either a linear function, $F(X)=X\beta$, or that $F(\cdot)$ is simply zero, $F(X)=0$. 

For notational simplicity, we assume that only one parameter $\mu$ of the data distribution $p(y|\mu,\xi)$ is related to a latent Gaussian variable. However, the extension to multivariate data and/or the situation where multiple parameters depend on potentially multiple Gaussian variables is straightforward. Also note that we assume that the latent variable $b$ follows a Gaussian distribution, but moderate violations of this assumption have been shown to have only a small effect on prediction accuracy in the context of generalized linear mixed models \citep{mcculloch2011misspecifying}.

\subsection{Definition of Learners}\label{def_learn}
The marginal density of the response $y$ is given by
\begin{equation}\label{me_dens}
p(y|F,\theta,\xi)=\int p(y|\mu,\xi)p(b|\theta)db.
\end{equation}
Ideally, we would like to minimize the empirical risk functional
\begin{equation*}
R(F(\cdot),\theta, \xi): ~~(F(\cdot), \theta, \xi) ~\mapsto~ -\log(p(y|F,\theta,\xi))\Big|_{F=F(X)}.
\end{equation*}
If $p(y|\mu,\xi)$ is a Gaussian distribution, the marginalization in \eqref{me_dens} can be done analytically. For non-Gaussian data, however, an approximation has to be used. In order that an approximation is applicable for the boosting algorithms presented in this article, it needs to fulfill two requirements. First, one must be able to compute it efficiently as this needs to be done repeatedly. Second, the gradient with respect to $F(\cdot)$ must be computable in an efficient way. 

Our goal is thus to find the joint minimizer
\begin{equation}\label{optim_def}
(\hat F(\cdot), \hat \theta, \hat \xi) =\argmin_{(F(\cdot),\theta,\xi) \in (\mathcal{H},\Theta,\Xi)}R^A(F(\cdot),\theta,\xi),
\end{equation}
where $R^A(F(\cdot),\theta, \xi)$ is an approximate empirical risk functional
\begin{equation}\label{obj_func}
R^A(F(\cdot),\theta, \xi): ~~(F(\cdot), \theta, \xi) ~\mapsto~ L^A(y|F,\theta,\xi)\Big|_{F=F(X)},
\end{equation}
and $L^A(y|F,\theta,\xi)$ is an approximation to the negative logarithmic marginal likelihood $-\log(p(y|F,\theta,\xi))$. Note that $R^A(F(\cdot),\theta, \xi)$ is calculated by evaluating $F(\cdot)$ at $X$ and then calculating $L^A(y|F=F(X),\theta,\xi).$ I.e., the risk functional $R^A(F(\cdot),\theta, \xi)$ is, in general, infinite dimensional in its first argument and finite dimensional in its other arguments.

\subsection{The Laplace Approximation}
In this article, we focus on the Laplace approximation \citep{tierney1986accurate} for approximating the marginal likelihood in \eqref{me_dens}. However, other approximations that satisfy the above-mentioned requirements can equally well be used. For instance, if $p(y|\mu,\xi)p(b|\theta)$ factors into low-dimensional components, numerical integration, such as adaptive Gauss-Hermite quadrature, can be used to approximate \eqref{me_dens}. Examples, where this applies, are single-level grouped random effects models. Another potential approximation is expectation propagation (EP) \citep{minka2001expectation}. Depending on the data distribution, for instance, for binary classification, this can lead to more accurate approximations \citep{kuss2005assessing}, but it is computationally more demanding than the Laplace approximation. 

For applying the Laplace approximation, we assume that $p(y_i|\mu_i,\xi)$ is three times differentiable in $\mu_i$. The Laplace approximation for \eqref{me_dens} is given by
\ifTwoColumn
\begin{equation}\label{laplace}
\begin{split}
p(y|F,\theta,\xi) \approx &  p(y|\tilde \mu,\xi) p(\tilde b|\theta)\\
&\cdot \text{det}\left(Z^T\tilde WZ+\Sigma^{-1}\right)^{- 1/2} (2\pi)^{m/2},
\end{split}
\end{equation}
\else
\begin{equation}\label{laplace}
p(y|F,\theta,\xi)\approx  p(y|\tilde \mu,\xi) p(\tilde b|\theta) \text{det}\left(Z^T\tilde WZ+\Sigma^{-1}\right)^{- 1/2}(2\pi)^{m/2},
\end{equation}
\fi
where $\tilde b$ is the mode of $p(y|b,F,\xi)p(b|\theta)$,
\begin{equation*}
\begin{split}
\tilde b &=\argmax_{b}{p(y|\mu,\xi)p(b|\theta)} \\
&=\argmax_{b}{\log p(y|\mu,\xi) - \frac{1}{2} b^T\Sigma^{-1} b},
\end{split}
\end{equation*}
$\tilde \mu = F(X)+Z\tilde b$, and $\tilde W\in\mathbb{R}^{n\times n}$ is a diagonal matrix with entries
$$(\tilde W)_{ii}=-\frac{\partial^2 \log p(y_i| \mu_i,\xi)}{\partial \mu_i^2}\Bigg|_{\mu=\tilde \mu}.$$
Note that $\tilde b$ depends on $F=F(X)$, $\theta$, and $\xi$, but we suppress this dependence for notational simplicity. The mode can be found, for instance, using Newton's method.

Modulo constant terms that do not depend on $\theta$, $\xi$, or $F$, the Laplace approximation to the negative log-marginal likelihood $-\log(p(y|F,\theta,\xi))$ is given by
\ifTwoColumn
\begin{equation}\label{LA_loss}
\begin{split}
L^{LA}(y,F,\theta,\xi)=&-\log p(y|\tilde \mu,\xi) + \frac{1}{2} \tilde b^T\Sigma^{-1} \tilde b\\ 
&+  \frac{1}{2}\log\text{det}\left(\Sigma Z^T\tilde WZ+I_m\right).
\end{split}
\end{equation}
\else
\begin{equation}\label{LA_loss}
L^{LA}(y,F,\theta,\xi)=-\log p(y|\tilde \mu,\xi) + \frac{1}{2} \tilde b^T\Sigma^{-1} \tilde b +  \frac{1}{2}\log\text{det}\left(\Sigma Z^T\tilde WZ+I_m\right).
\end{equation}
\fi
Since
$$ p(y|F,\theta,\xi) = \frac{p(y|F,b,\xi)p(b|\theta)}{p(b|y,\theta,\xi)},$$
the Laplace approximation in \eqref{laplace} is equivalent to the following Gaussian approximation to the posterior $p(b|y,\theta,\xi)$:
\begin{equation}\label{LA_post}
p(b|y,\theta,\xi) \approx \mathcal{N}\left(\tilde b,\left(Z^T\tilde WZ+\Sigma^{-1}\right)^{-1}\right).
\end{equation}

\subsubsection{Gradients}
For the boosting algorithms introduced in the following, we need to calculate $\frac{\partial L^{LA}(y,F,\theta,\xi)}{\partial F}$ and also $\frac{\partial L^{LA}(y,F,\theta,\xi)}{\partial \theta}$ and $\frac{\partial L^{LA}(y,F,\theta,\xi)}{\partial \xi}$ if, e.g., a first-order optimization method is used for minimizing with respect to $\theta$ and $\xi$. These are obtained as follows.

\begin{proposition}\label{prop_grad_LA}
	The gradients with respect to $F$, $\theta$, and $\xi$ of the approximate negative logarithmic marginal likelihood of the Laplace approximation $L^{LA}(y,F,\theta,\xi)$ in \eqref{LA_loss} are given by
	\ifTwoColumn
	\begin{equation}\label{grad_F_LA}
	\begin{split}
	\frac{\partial L^{LA}(y,F,\theta,\xi)}{\partial F_i}=&-\frac{\partial \log p(y_i|\tilde \mu_i,\xi)}{\partial \tilde\mu_i}\\ 
	& + \frac{1}{2}\text{tr}\left(\left(Z^T\tilde WZ+\Sigma^{-1}\right)^{-1} Z^T \frac{\partial \tilde W  }{\partial F_i} Z\right)\\
	& + \left(\frac{\partial L^{LA}(y,F,\theta,\xi)}{\partial \tilde b}\right)^T\frac{\partial\tilde  b}{\partial F_i},
	\end{split}
	\end{equation}
	\begin{equation}\label{grad_par_LA}
	\begin{split}
	\frac{\partial L^{LA}(y,F,\theta,\xi)}{\partial \theta_k}=&- \frac{1}{2}\tilde b^T \Sigma^{-1} \frac{\partial \Sigma }{\partial  \theta_k} \Sigma^{-1} \tilde b \\
	&+ \frac{1}{2}\text{tr}\left(\left(\Sigma +(Z^T\tilde WZ)^{-1}\right)^{-1}\frac{\partial \Sigma }{\partial \theta_k} \right) \\
	&+ \left(\frac{\partial L^{LA}(y,F,\theta,\xi)}{\partial \tilde b}\right)^T\frac{\partial\tilde  b}{\partial \theta_k},
	\end{split}
	\end{equation}
	\begin{equation}\label{grad_par2_LA}
	\begin{split}
	\frac{\partial L^{LA}(y,F,\theta,\xi)}{\partial \xi_l}=&-\frac{\partial \log p(y|\tilde \mu,\xi)}{\partial \xi_l}\\ 
	&+ \frac{1}{2}\text{tr}\left(\left( Z^T\tilde WZ+\Sigma^{-1}\right)^{-1} Z^T \frac{\partial \tilde W  }{\partial \xi_l} Z\right) \\
	&+ \left(\frac{\partial L^{LA}(y,F,\theta,\xi)}{\partial \tilde b}\right)^T\frac{\partial\tilde  b}{\partial \xi_l},
	\end{split}
	\end{equation}
	for $i=1,\dots,n,$ $k=1,\dots,q,$ $l=1,\dots,r,$ 
	\else
	\begin{equation}\label{grad_F_LA}
	\begin{split}
	\frac{\partial L^{LA}(y,F,\theta,\xi)}{\partial F_i}=&-\frac{\partial \log p(y_i|\tilde \mu_i,\xi)}{\partial \tilde\mu_i} + \frac{1}{2}\text{tr}\left(\left( Z^T\tilde WZ+\Sigma^{-1}\right)^{-1}\left( Z^T \frac{\partial \tilde W  }{\partial F_i} Z\right)\right)\\
	& + \left(\frac{\partial L^{LA}(y,F,\theta,\xi)}{\partial \tilde b}\right)^T\frac{\partial\tilde  b}{\partial F_i},~~~~ i=1,\dots,n,
	\end{split}
	\end{equation}
	\begin{equation}\label{grad_par_LA}
	\begin{split}
	\frac{\partial L^{LA}(y,F,\theta,\xi)}{\partial \theta_k}=&- \frac{1}{2}\tilde b^T \Sigma^{-1} \frac{\partial \Sigma }{\partial  \theta_k} \Sigma^{-1} \tilde b + \frac{1}{2}\text{tr}\left(\left(\Sigma +(Z^T\tilde WZ)^{-1}\right)^{-1}\frac{\partial \Sigma }{\partial \theta_k} \right) \\
	&+ \left(\frac{\partial L^{LA}(y,F,\theta,\xi)}{\partial \tilde b}\right)^T\frac{\partial\tilde  b}{\partial \theta_k}, ~~~~ k=1,\dots,q,
	\end{split}
	\end{equation}
	\begin{equation}\label{grad_par2_LA}
	\begin{split}
	\frac{\partial L^{LA}(y,F,\theta,\xi)}{\partial \xi_l}=&-\frac{\partial \log p(y|\tilde \mu,\xi)}{\partial \xi_l}  + \frac{1}{2}\text{tr}\left(\left( Z^T\tilde WZ+\Sigma^{-1}\right)^{-1}\left( Z^T \frac{\partial \tilde W  }{\partial \xi_l} Z\right)\right) \\
	&+ \left(\frac{\partial L^{LA}(y,F,\theta,\xi)}{\partial \tilde b}\right)^T\frac{\partial\tilde  b}{\partial \xi_l}, ~~~~ l=1,\dots,r,
	\end{split}
	\end{equation}
	\fi
	where
	\begin{equation}\label{grad_loss_mode}
	\frac{\partial L^{LA}(y,F,\theta,\xi)}{\partial \tilde b_j}
	=\frac{1}{2}\text{tr}\left(\left(Z^T\tilde WZ+\Sigma^{-1}\right)^{-1} Z^T\frac{\partial \tilde W}{\partial \tilde b_j}Z\right),
	\end{equation} 
	\begin{equation}\label{grad_mode}
	\frac{\partial \tilde b}{\partial F_i} =- \left(Z^T\tilde W Z + \Sigma^{-1}\right)^{-1}Z^T\tilde W_{\cdot i}
	\end{equation}
	\begin{equation}\label{grad_mode_par}
	\frac{\partial \tilde b}{\partial \theta_k} =\left( Z^T\tilde W Z + \Sigma^{-1}\right)^{-1}  \Sigma^{-1} \frac{\partial\Sigma}{\partial \theta_k} Z^T\frac{\partial \log p(y|\tilde \mu,\xi)}{\partial  \tilde \mu} ,
	\end{equation}
	\begin{equation}\label{grad_mode_par2}
	\frac{\partial \tilde b}{\partial \xi_l} = \left( Z^T\tilde W Z + \Sigma^{-1}\right)^{-1} Z^T\frac{\partial^2 \log p(y|\tilde \mu,\xi)}{\partial  \xi_l\partial \tilde \mu},
	\end{equation}
	$\tilde W_{\cdot i}$ denotes column $i$ of $\tilde W$,
	$\frac{\partial\tilde W}{\partial F_i}=\text{diag}\left(-\frac{\partial^3 \log p(y_i|\tilde \mu_i,\xi)}{\partial\tilde \mu_i^3}\right)$, $\frac{\partial \tilde W  }{\partial \xi_l}=\text{diag}\left(-\frac{\partial^3 \log p(y_i|\tilde \mu_i,\xi)}{\partial\tilde \mu_i^2\partial \xi_l}\right)$, $\frac{\partial \tilde W}{\partial \tilde b_j} = \text{diag}\left( -\frac{\partial^3 \log p(y_i|\tilde \mu_i,\xi)}{\partial\tilde \mu_i^3}Z_{ij}\right)$.
\end{proposition}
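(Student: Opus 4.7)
The plan is to view $L^{LA}$ as a composition $L^{LA}(F,\theta,\xi)=L_{\mathrm{expl}}^{LA}\bigl(F,\theta,\xi,\tilde b(F,\theta,\xi)\bigr)$ and split every total derivative into an explicit part (with the mode $\tilde b$ held fixed) and an implicit part carried through $\tilde b$. For any scalar parameter $v\in\{F_i,\theta_k,\xi_l\}$ the chain rule yields
\[
\frac{d L^{LA}}{d v}=\left.\frac{\partial L^{LA}}{\partial v}\right|_{\tilde b}+\left(\frac{\partial L^{LA}}{\partial \tilde b}\right)^{\!T}\frac{\partial \tilde b}{\partial v},
\]
which is exactly the structure of \eqref{grad_F_LA}--\eqref{grad_par2_LA}. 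I would then compute the two summands on the right separately.

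For the explicit partials at fixed $\tilde b$, I would differentiate the three terms of \eqref{LA_loss} one by one. The data log-density $-\log p(y|\tilde\mu,\xi)$ contributes $-\partial\log p(y_i|\tilde\mu_i,\xi)/\partial\tilde\mu_i$ (via $\partial\tilde\mu_i/\partial F_i=1$ since $\tilde\mu=F(X)+Z\tilde b$) and $-\partial\log p(y|\tilde\mu,\xi)/\partial\xi_l$, and nothing to $\partial/\partial\theta_k$. The quadratic $\tfrac12 \tilde b^T\Sigma^{-1}\tilde b$ contributes only to $\partial/\partial\theta_k$, giving $-\tfrac12 \tilde b^T\Sigma^{-1}(\partial\Sigma/\partial\theta_k)\Sigma^{-1}\tilde b$ via $d\Sigma^{-1}=-\Sigma^{-1}(d\Sigma)\Sigma^{-1}$. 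The log-determinant term is handled by Jacobi's formula $d\log\det M=\mathrm{tr}(M^{-1}dM)$ applied to $M=\Sigma Z^T\tilde W Z+I_m$; this reduces to a trace involving $d\tilde W$ (for $v=F_i$ or $\xi_l$) or $d\Sigma$ (for $v=\theta_k$), which I would rewrite using the algebraic identities
\[
(\Sigma A+I_m)^{-1}\Sigma=(A+\Sigma^{-1})^{-1},\qquad A(\Sigma A+I_m)^{-1}=(\Sigma+A^{-1})^{-1},
\]
with $A=Z^T\tilde W Z$, to obtain the exact forms displayed in \eqref{grad_F_LA}, \eqref{grad_par_LA}, and \eqref{grad_par2_LA}.

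The key simplification appears in \eqref{grad_loss_mode}: when differentiating $L^{LA}$ with respect to $\tilde b_j$ at fixed $(F,\theta,\xi)$, the contributions of $-\log p(y|\tilde\mu,\xi)$ and $\tfrac12 \tilde b^T\Sigma^{-1}\tilde b$ exactly cancel by the first-order optimality condition
\[
Z^T\frac{\partial\log p(y|\tilde\mu,\xi)}{\partial\tilde\mu}=\Sigma^{-1}\tilde b
\]
defining $\tilde b$ as the mode. Only the log-det term survives, producing \eqref{grad_loss_mode} after the same matrix-identity manipulation as above.

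Finally, the implicit derivatives $\partial\tilde b/\partial v$ follow from the implicit function theorem applied to the mode equation $g(\tilde b;F,\theta,\xi):=Z^T\partial\log p(y|F+Z\tilde b,\xi)/\partial\tilde\mu-\Sigma^{-1}\tilde b=0$. A direct calculation gives $\partial g/\partial\tilde b=-(Z^T\tilde W Z+\Sigma^{-1})$, and differentiating $g=0$ with respect to $F_i$, $\theta_k$, and $\xi_l$ respectively yields \eqref{grad_mode}, \eqref{grad_mode_par}, and \eqref{grad_mode_par2}; for \eqref{grad_mode_par} the mode condition is used once more to replace $\Sigma^{-1}\tilde b$ by $Z^T\partial\log p(y|\tilde\mu,\xi)/\partial\tilde\mu$. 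The main obstacle is purely algebraic bookkeeping, namely the repeated interchange between the forms $\Sigma Z^T\tilde W Z+I_m$, $Z^T\tilde W Z+\Sigma^{-1}$, and $\Sigma+(Z^T\tilde W Z)^{-1}$, while every conceptual step reduces to Jacobi's formula, standard matrix-calculus identities, and the implicit function theorem.
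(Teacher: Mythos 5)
Your proposal is correct and follows essentially the same route as the paper's proof: decompose each total derivative into an explicit part and an implicit part through $\tilde b$, observe that the first two terms of \eqref{LA_loss} drop out of $\partial L^{LA}/\partial \tilde b$ by the mode condition, and obtain $\partial \tilde b/\partial v$ by implicitly differentiating the first-order optimality equation. The only (immaterial) difference is that for \eqref{grad_mode_par} you differentiate the mode equation directly and then substitute the mode condition, whereas the paper first multiplies the equation by $\Sigma$ and then differentiates; your explicit invocation of Jacobi's formula and the identities $(\Sigma A+I_m)^{-1}\Sigma=(A+\Sigma^{-1})^{-1}$ and $A(\Sigma A+I_m)^{-1}=(\Sigma+A^{-1})^{-1}$ merely spells out steps the paper leaves implicit.
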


\begin{proof}[Proof of Proposition \ref{prop_grad_LA}]
	The derivation is similar as in \citet[Chapter 5.5.1]{williams2006gaussian}. All three gradients are sums of the explicit derivatives of $L^{LA}(y,F,\theta,\xi)$ and implicit derivatives through the dependency of $\tilde b$ on $F$, $\theta$, and $\xi$. The explicit derivatives with respect to $F$, $\theta$, and $\xi$, ignoring any dependency through $\tilde b$, are given in the first two summands in \eqref{grad_F_LA}, \eqref{grad_par_LA}, and \eqref{grad_par2_LA}. For the implicit derivatives, we first note that
	$$\frac{\partial L^{LA}(y,F,\theta,\xi)}{\partial \tilde b_j}=\frac{1}{2}\text{tr}\left(\left(Z^T\tilde WZ+\Sigma^{-1}\right)^{-1} Z^T\frac{\partial \tilde W}{\partial \tilde b_j}Z\right)$$
	where we use the fact that the derivative of the first two terms in \eqref{LA_loss} with respect to $\tilde b$ vanishes, and 
	$$ \frac{\partial \tilde W}{\partial \tilde b_j} = - \text{diag}\left( \frac{\partial^3 \log p(y_i|\tilde \mu_i,\xi)}{\partial\tilde \mu_i^3}Z_{ij}\right)$$
	since 
	\begin{equation*}
	\begin{split}
	\frac{\partial \tilde W_{ii}}{\partial \tilde b_j}&=\left(\frac{\partial \tilde W_{ii}}{\partial \tilde\mu}\right)^T\frac{\partial \tilde\mu}{\partial \tilde b_j}\\
	&=\frac{\partial \tilde W_{ii}}{\partial\tilde \mu_i}Z_{ij}\\
	&=-\frac{\partial^3 \log p(y_i|\tilde \mu_i,\xi)}{\partial \mu_i^3}Z_{ij}.
	\end{split}
	\end{equation*}
	
	To find $\frac{\partial \tilde b}{\partial F_i}$, we differentiate
	\begin{equation}\label{norm_eq_mode}
	\begin{split}
	0 & = \frac{\partial }{\partial \tilde b}\left(\log p(y|\tilde \mu,\xi) - \frac{1}{2} \tilde b^T\Sigma^{-1} \tilde b\right) \\
	&=Z^T\frac{\partial \log p(y|\tilde \mu,\xi)}{\partial \tilde  \mu}- \Sigma^{-1} \tilde b
	\end{split}
	\end{equation}
	with respect to $F_i$ and obtain
	\ifTwoColumn
	\begin{equation*}
	\begin{split}
	0 =& Z^T\frac{\partial^2 \log p(y|\tilde \mu,\xi)}{\partial \tilde \mu_i\partial \tilde \mu} \\ 
	&+ \frac{\partial }{\partial \tilde b} \left(Z^T\frac{\partial \log p(y|\tilde \mu,\xi)}{\partial \tilde  \mu} - \Sigma^{-1} \tilde b\right)\frac{\partial \tilde b}{\partial F_i}\\
	=& -Z^T \tilde W_{\cdot i} +  \left(-Z^T\tilde W Z- \Sigma^{-1}\right)\frac{\partial \tilde b}{\partial F_i},
	\end{split}
	\end{equation*}
	\else
	\begin{equation*}
	\begin{split}
	0 =& Z^T\frac{\partial^2 \log p(y|\tilde \mu,\xi)}{\partial \tilde \mu_i\partial \tilde \mu}  + \frac{\partial }{\partial \tilde b} \left(Z^T\frac{\partial \log p(y|\tilde \mu,\xi)}{\partial \tilde  \mu} - \Sigma^{-1} \tilde b\right)\frac{\partial \tilde b}{\partial F_i}\\
	=& -Z^T \tilde W_{\cdot i} +  \left(-Z^T\tilde W Z- \Sigma^{-1}\right)\frac{\partial \tilde b}{\partial F_i},
	\end{split}
	\end{equation*}
	\fi
	where $\tilde W_{\cdot i}=\frac{\partial^2 \log p(y|\tilde \mu,\xi)}{\partial \tilde \mu_i\partial \tilde \mu}$ is column $i$ of $\tilde W$, i.e. a vector of $0$'s except for the $i$-th entry which is given by $\frac{\partial^2 \log p(y_i|\tilde \mu_i,\xi)}{\partial \tilde \mu_i^2}$. The statement in Equation \eqref{grad_mode} thus follows. 
	Similarly, multiplying Equation \eqref{norm_eq_mode} with $\Sigma$ and differentiating it with respect to $\theta_k$ gives
	\begin{equation*}
	0 = \frac{\partial\Sigma}{\partial \theta_k} Z^T\frac{\partial \log p(y|\tilde \mu,\xi)}{\partial  \tilde \mu}  + \left(-\Sigma Z^T\tilde W Z- I_m\right)\frac{\partial \tilde b}{\partial \theta_k},
	\end{equation*}
	from which we obtain Equation \eqref{grad_mode_par} by multiplying with $\Sigma^{-1}$. Equation \eqref{grad_mode_par2} follows analogously.
\end{proof}
We note that in our software implementation, we use different equivalent versions of the above result depending on the specific latent Gaussian model for computational efficiency and stability. If $Zb$ consists of only grouped random effects, we use the version presented in Proposition \ref{prop_grad_LA} except that in \eqref{grad_par_LA}, we replace $\left(\Sigma +(Z^T\tilde WZ)^{-1}\right)^{-1}\frac{\partial \Sigma }{\partial \theta_k}$ with the equivalent expression $\left(Z^T\tilde WZ+\Sigma^{-1}\right)^{-1}\Sigma^{-1}\frac{\partial \Sigma }{\partial \theta_k}Z^T\tilde WZ$. In this case, $Z$ and $\Sigma^{-1}$ are sparse, and the random effects dimension $m$ is smaller than the number of samples $n$. It follows that a Cholesky factor for $Z^T\tilde WZ+\Sigma^{-1}$ can be computed efficiently using sparse matrix algebra, and also the remaining calculations for obtaining the gradients in Proposition \ref{prop_grad_LA} can be done efficiently. If $Zb$ contains a finite dimensional versions of a Gaussian process, we use the Sherman-Morrison-Woodbury formula
\ifTwoColumn
$\left( Z^T\tilde W Z + \Sigma^{-1}\right)^{-1} =\Sigma - \Sigma Z^T \tilde W^{1/2} \left( I_m + \tilde W^{1/2} Z \Sigma Z^T \tilde W^{1/2} \right)^{-1}\tilde W^{1/2}Z\Sigma,$
\else
$$\left( Z^T\tilde W Z + \Sigma^{-1}\right)^{-1} =\Sigma - \Sigma Z^T \tilde W^{1/2} \left( I_m + \tilde W^{1/2} Z \Sigma Z^T \tilde W^{1/2} \right)^{-1}\tilde W^{1/2}Z\Sigma,$$
\fi
factorize the matrix $I_m + \tilde W^{1/2} Z \Sigma Z^T \tilde W^{1/2}$, and, similarly as in \citet[Chapter 5.5.1]{williams2006gaussian}, adapt all calculations in Proposition \ref{prop_grad_LA} accordingly. 

\section{Latent Gaussian Model Boosting}\label{LaGaBoostsec}
We propose to do the minimization of the risk functional in \eqref{optim_def} using a novel boosting algorithm presented in the following. For known and fixed $\theta$ and $\xi$, boosting finds a minimizer of the approximate empirical risk functional $R^A(F(\cdot),\theta, \xi)$ in a greedy way by sequentially adding an update $f_m(\cdot)$ to the current estimate $F_{m-1}(\cdot)$:
\begin{equation}\label{boostupdate}
F_m(\cdot)= F_{m-1}(\cdot)+ f_m(\cdot),~~f_m\in \mathcal{S},
\end{equation}
where $f_m(\cdot)$, $m=1,\dots,M$, is chosen such that its addition results in the minimization of the risk. This minimization cannot be done analytically and an approximation is thus used. In general, such an approximation consists of either a penalized functional first-order or a functional second-order Taylor expansion of the risk around the current estimate $F_{m-1}(\cdot)$. This corresponds to functional gradient descent or functional Newton steps. \ifNotAnonymous See \citet{sigrist2018gradient} for more information on the distinction between gradient and Newton boosting. \fi

In our case, we use functional gradient descent. Specifically, $f_m(\cdot)$ is given by the least squares approximation to the vector obtained when evaluating the negative functional gradient of $R^A(F(\cdot),\theta, \xi)$ at $(F_{m-1}(\cdot), I_{X_i}(\cdot))$, $i=1,\dots,n$, where $I_{X_i}(\cdot)$ are indicator functions which equal $1$ at $X_i$ and $0$ otherwise. Equivalently, $f_m(\cdot)$ is the  minimizer of a first-order functional Taylor approximation of $R^A(F(\cdot),\theta, \xi)$ around $F_{m-1}(\cdot)$ with an $L^2$ penalty on $f(\cdot)$ evaluated at $(X_i)$\ifNotAnonymous ; see, e.g., \citet{sigrist2018gradient} for more information. \else . \fi It is easily seen that the negative G\^ateaux derivative of $R^A(F(\cdot),\theta, \xi)$ evaluated at $(F_{m-1}(\cdot), I_{X_i}(\cdot))$ is given by the vector $-\frac{\partial L^{A}(y,F,\theta,\xi)}{\partial F}\Big|_{F=F_{m-1}(X)}$ which we denote shortly as $-\frac{\partial L^{A}(y,F_{m-1},\theta,\xi)}{\partial F}$. This means that $f_m(\cdot)$ can be found as the following least squares approximation:
\begin{equation}\label{grad_boost}
f_m(\cdot)=\argmin_{f(\cdot)\in \mathcal{S}} \left\|-\frac{\partial L^{A}(y,F_{m-1},\theta,\xi)}{\partial F}-f\right\|^2,
\end{equation}
where $f=(f(X_1),\dots,f(X_n))^T$. Note that $\frac{\partial L^{A}(y,F_{m-1},\theta,\xi)}{\partial F}$ depends on the approximation used for the marginal log-likelihood. For the Laplace approximation, this is given in Proposition \ref{prop_grad_LA}.

It has been empirically observed that damping the update in \eqref{boostupdate},
\begin{equation*}
F_m(\cdot)= F_{m-1}(\cdot)+ \nu f_m(\cdot),~~\nu>0,
\end{equation*}
results in higher prediction accuracy \citep{friedman2001greedy}. Further, functional gradient descent can also be accelerated using momentum. For instance, \citet{biau2019accelerated} and \citet{lu2019accelerating} propose to use Nesterov acceleration \citep{nesterov2004introductory} for gradient boosting.

To jointly learn $F(\cdot)$ and $(\theta, \xi)$, we propose to combine functional boosting updates in the direction of $F(\cdot)$ with coordinate descent steps in $\theta$ and $\xi$. \ifNotAnonymous The reasons for this choice are outlined in \citet{sigrist2020gaussian}. \fi The LaGaBoost Algorithm \ref{LaGaBoost} summarizes our approach. Note that, despite not being explicitly stated in Algorithm \ref{LaGaBoost}, the approximation for the negative logarithmic marginal likelihood needs to be calculated repeatedly in the algorithm whenever $L^{A}(y,F,\theta,\xi)$ is evaluated or a gradient of it is calculated.

\begin{algorithm}[ht!]
	\SetKwInOut{Input}{Input}
	\SetKwInOut{Output}{Output}
	\Input{Initial values $\theta_0\in\Theta$ and, if applicable, $\xi_0\in\Xi$, learning rate $\nu>0$, number of boosting iterations $M\in\mathbb{N}$, approximation $L^{A}(y,F,\theta,\xi)$} 
	\Output{Function $\hat F(\cdot) = F_{M}(\cdot)$, hyperparameters $\hat \theta = \theta_M$, and auxiliary parameters $\hat \xi=\xi_M$}
	\caption{LaGaBoost: Latent Gaussian model Boosting}\label{LaGaBoost}
	\begin{algorithmic}[1]
		\STATE Initialize $F_0(\cdot)=\argmin_{c\in\mathbb{R}}L^A(y,c\cdot 1,\theta_0,\xi_0)$
		\FOR{$m=1$ {\bfseries to} $M$}
		\STATE Find $(\theta_m,\xi_m)=\underset{(\theta,\xi)\in(\Theta,\Xi)}{\argmin} L^{A}(y,F_{m-1},\theta,\xi)$ using a method for convex optimization initialized with $(\theta_{m-1},\xi_{m-1})$
		\STATE Find $f_m(\cdot)=\underset{f(\cdot)\in \mathcal{S}}{\argmin}\left\|-\frac{\partial L^{A}(y,F_{m-1},\theta_m,\xi_m)}{\partial F}-f\right\|^2$ 
		\STATE Update $F_m(\cdot)= F_{m-1}(\cdot)+ \nu f_m(\cdot)$
		\ENDFOR
	\end{algorithmic}
\end{algorithm}

If the risk functional $R^A(F(\cdot),\theta, \xi)$ is convex in its arguments and $\Theta$ and $\Xi$ are convex sets, then \eqref{optim_def} is a convex optimization problem since $\mathcal{H} = span(\mathcal{S})$ is also convex. This means that there exists a unique minimizer and the LaGaBoost algorithm converges to the minimum, as long as the learning rate $\nu$ is not too large to avoid overshooting, i.e., that the risk increases when doing too large steps. Further, the computational complexity of the algorithm depends on the specific latent Gaussian variable model and the marginal likelihood approximation used. For instance, for the Laplace approximation, the calculation of Cholesky factors is usually the bottleneck. 

\subsection{Out-of-sample Learning for Hyperparameters}\label{lagaboostoost}
It has recently been observed that state-of-the-art machine learning techniques such as neural networks, kernel machines, or boosting can achieve a zero training loss and interpolate the training data while at the same time having excellent generalization properties \citep{zhang2016understanding, wyner2017explaining, belkin18a, belkin2019reconciling, bartlett2020benign}. Such an interpolation of the training data could be problematic for the hyperparameter estimation in the LaGaBoost algorithm. We propose to circumvent this potential problem by estimating the hyperparameters $\theta$ and the auxiliary parameters $\xi$ using out-of-sample validation data obtained by applying cross-validation or by partitioning the data into two disjoint training and validation sets. To avoid that the function $F(\cdot)$ and/or the parameters $\theta$ and $\xi$ are only learned on a fraction of the full data, we propose a two-step approach presented in the LaGaBoostOOS Algorithm \ref{lagaboostoos_algo}. In brief, the LaGaBoostOOS algorithm first runs the LaGaBoost algorithm on the training data and obtains predictions $\hat F_{val}$ for the function $F(\cdot)$ on the left out validation data. The parameters $\theta$ and $\xi$ are then estimated on the validation data using the predicted values $\hat F_{val}$. Finally, the LaGaBoost algorithm is run a second time on the full data while holding $\theta$ and $\xi$ fixed. When $k$-fold cross-validation is used, both the function $F(\cdot)$ and the parameters $\theta$ and $\xi$ are thus learned using the full data.
\begin{algorithm}[ht!]
	\SetKwInOut{Input}{Input}
	\SetKwInOut{Output}{Output}
	\Input{Initial values $\theta_0\in\Theta$ and, if applicable, $\xi_0\in\Xi$, learning rate $\nu>0$, number of boosting iterations $M\in\mathbb{N}$, approximation $L^{A}(y,F,\theta,\xi)$}
	\Output{Function $\hat F(\cdot) = F_{M}(\cdot)$, hyperparameters $\hat \theta = \theta_M$, and auxiliary parameters $\hat \xi=\xi_M$}
	\caption{LaGaBoostOOS: Latent Gaussian model Boosting with Out-Of-Sample hyperparameter estimation}\label{lagaboostoos_algo}
	\begin{algorithmic}[1]
		\STATE Partition the data into training and validation sets, e.g., using $k$-fold cross-validation or by partitioning the data into two disjoint sets
		\STATE Run the LaGaBoost algorithm on the training data and generate predictions $\hat F_{val}$ for the function $F(\cdot)$ on the validation data 
		\STATE Find $(\hat\theta,\hat\xi)=\argmin_{(\theta,\xi)\in(\Theta,\Xi)}L^{A}(y_{val},\hat F_{val},\theta,\xi)$ using the validation data with response variable $y_{val}$
		\STATE Run the LaGaBoost algorithm on the full data while holding the hyperparameters $\theta$ and auxiliary parameters $\xi$ fixed at $\hat \theta$ and $\hat \xi$, i.e., by skipping line 3 in Algorithm \ref{LaGaBoost}, to obtain $\hat F(\cdot)$
	\end{algorithmic}
\end{algorithm}

\subsection{Prediction}\label{prediction}
In the following, we show how predictions can be made. We distinguish between predicting observables variables $y_p$ and latent variables $\mu_p$. Let $y_p\in\mathbb{R}^{n_p}$ and $\mu_p\in\mathbb{R}^{n_p}$ denote the observable and latent random variables for which predictions should be made. The following holds true:
\begin{equation}\label{pred_obs_dis}
\begin{split}
\begin{pmatrix} b \\ \mu_p\end{pmatrix} & = \begin{pmatrix} 0 \\ F(X_p)\end{pmatrix} + 
\begin{pmatrix} (I_{m},0_{m\times m_p}) \\ Z_p \end{pmatrix}\begin{pmatrix} b \\ b_p \end{pmatrix},\\
\sim & \mathcal{N} \ifTwoColumn \small \fi \left(\begin{pmatrix} 0 \\ F(X_p)\end{pmatrix},
\begin{pmatrix} \Sigma & (\Sigma,\Sigma_{op})Z_p^T \\ Z_p(\Sigma,\Sigma_{op})^T &Z_p\begin{pmatrix} \Sigma& \Sigma_{op}\\ \Sigma_{op}^T&\Sigma_p\end{pmatrix}Z_p^T\end{pmatrix}\right)
\end{split}
\end{equation}
where $b_p\in\mathbb{R}^{m_p}$ is a latent random variable for which no corresponding data has been observed in $y$, $(I_{m},0_{m\times m_p})\in \mathbb{R}^{m\times (m + m_p)}$, $I_{m}\in \mathbb{R}^{m\times m}$ is an identity matrix,  $0_{m\times m_p}\in\mathbb{R}^{n\times m_p}$ is a matrix of zeros, the matrix $Z_p\in \mathbb{R}^{n_p\times (m + m_p)}$ relates the vector of observed and new latent variables $(b^T,b_p^T)^T\in\mathbb{R}^{m + m_p}$ to $\mu_p$, $(\Sigma,\Sigma_{op})\in\mathbb{R}^{m\times (m + m_p)}$, $\Sigma_{op} = \text{Cov}(b,b_p)$, $\Sigma_{p} = \text{Cov}(b_p)$, and $X_p\in\mathbb{R}^{n_p\times p}$ is the predictor variable matrix of the predictions. 

By the law of total probability, we have
$$p(\mu_p|y,\theta,\xi)=\int p(\mu_p|b,\theta)p(b|y,\theta,\xi)db$$
and
\begin{equation}\label{pred_obs}
p(y_p|y,\theta,\xi)=\int p(y_p|\mu_p,\xi) p(\mu_p|y,\theta,\xi)d\mu_p.
\end{equation}
If we apply the Laplace approximation, then by \eqref{LA_post}, \eqref{pred_obs_dis}, standard results for conditional distributions of multivariate Gaussian distributions, and the law of total variance, we have
$$p(\mu_p|y,\theta,\xi)\approx \mathcal{N}\left(\omega_p,\Omega_p\right),$$
where
\ifTwoColumn
\begin{equation*}
\begin{split}
\omega_p=&F(X_p)+ Z_p(\Sigma,\Sigma_{op})^T\Sigma^{-1}\tilde b,\\
=& F(X_p)+ Z_p(\Sigma,\Sigma_{op})^TZ^T\frac{\partial \log p(y|\tilde \mu,\xi)}{\partial \tilde\mu},
\end{split}
\end{equation*}
\begin{equation*}
\begin{split}
\Omega_p
=&Z_p\begin{pmatrix} \Sigma& \Sigma_{op}\\ 
\Sigma_{op}^T&\Sigma_p\end{pmatrix}Z_p^T  
\\ & - 
Z_p(\Sigma,\Sigma_{op})^T 
\left( \Sigma + (Z^T\tilde WZ)^{-1} \right)^{-1} (\Sigma,\Sigma_{op})Z_p^T,
\end{split}
\end{equation*}
\else
\begin{equation*}
\begin{split}
\omega_p=&F(X_p)+ Z_p(\Sigma,\Sigma_{op})^T\Sigma^{-1}\tilde b,\\
=& F(X_p)+ Z_p(\Sigma,\Sigma_{op})^TZ^T\frac{\partial \log p(y|\tilde \mu,\xi)}{\partial \tilde\mu},\\
\Omega_p=&Z_p\begin{pmatrix} \Sigma& \Sigma_{op}\\
\Sigma_{op}^T&\Sigma_p\end{pmatrix}Z_p^T 
- 
Z_p(\Sigma,\Sigma_{op})^T 
\left(\Sigma^{-1} - \Sigma^{-1}\left(Z^T\tilde WZ+\Sigma^{-1}\right)^{-1}\Sigma^{-1} \right) (\Sigma,\Sigma_{op})Z_p^T,\\
=&Z_p\begin{pmatrix} \Sigma& \Sigma_{op}\\ 
\Sigma_{op}^T&\Sigma_p\end{pmatrix}Z_p^T 
- 
Z_p(\Sigma,\Sigma_{op})^T 
\left( \Sigma + (Z^T\tilde WZ)^{-1} \right)^{-1} (\Sigma,\Sigma_{op})Z_p^T,
\end{split}
\end{equation*}
\fi
where in the last line, we have used the Sherman-Morrison-Woodbury formula.

Further, the integral in \eqref{pred_obs} is analytically tractable for a Bernoulli likelihood with a probit link \citep[see, e.g.,][Chapter 3.9]{williams2006gaussian}, but for other likelihoods, it needs to be numerically approximated. In our software implementation and the experiments below, we use adaptive Gauss-Hermite quadrature \citep{liu1994note} as numeric integration technique.

\subsection{Software Implementation}\label{software}
The LaGaBoost and LaGaBoostOOS algorithms based on the Laplace approximation are implemented in the \texttt{GPBoost} library written in C++ with corresponding Python and R packages; see \url{https://github.com/fabsig/GPBoost} for more information. For linear algebra calculations, we rely on the \texttt{Eigen} library \citep{eigenweb}. Sparse matrix algebra is used, in particular for calculating Cholesky decompositions, whenever covariance matrices are sparse, e.g., in the case of grouped random effects. Further, multi-processor parallelization is done using \texttt{OpenMP}. For the tree-boosting part, in particular the tree growing algorithm, we use the \texttt{LightGBM} library \citep{ke2017lightgbm}. The \texttt{GPBoost} library allows for modeling Gaussian processes, grouped random effects including nested and crossed ones, random coefficients, and combinations of the former. Further, the \texttt{GPBoost} library currently implements gradient descent with optional Nesterov acceleration and the Nelder-Mead method for minimizing with respect to the parameters $\theta$ and $\xi$ in line 3 of the LaGaBoost Algorithm \ref{LaGaBoost}.

\section{Simulated Experiments}\label{simul}
In the following, we perform simulated experiments to compare the novel LaGaBoost algorithm to alternative approaches. We simulate binary classification data from a latent Gaussian model as in \eqref{regre_form} assuming a Bernoulli likelihood with a probit link function: $y_i\in\{0,1\}$, $P(y_i=1)=\Phi(\mu_i)$, $i=1,\dots,n$, where $\Phi(\cdot)$ denotes the standard normal cumulative distribution function. For the latent Gaussian variable $Zb$, we consider both grouped random effects with a single grouping level and a spatial Gaussian process model with an exponential covariance function $c(s,s')=\sigma_1^2 \exp(-\|s-s'\|/\rho)$ where the locations $s$ are in $[0,1]^2$ and $\rho=0.1$. The marginal variance in both models is set to $\sigma^2=1$. Concerning the function $F(\cdot)$ and the predictor variables $X$, we sample independently from
\begin{equation}\label{simF}
\begin{split}
F(x)&=C_1 + C_2\cdot(2x_1+x_2^2+4\cdot 1_{\{x_3>0\}}+2\log(|x_1|)x_3),\\
& ~~~~~~~~ x=(x_1,\dots,x_9)^T,~~x\sim \mathcal{N} ( 0 , I_9).
\end{split}
\end{equation}
This function has been used previously in \citet{hajjem2014mixed} and \citet{sigrist2020gaussian} to compare non-parametric mixed effects models for Gaussian data. The constant $C_1$ is chosen such that the mean of $F(x)$ is approximately $0$, and $C_2$ is chosen such that the variance of $F(x)$ equals approximately $1$, i.e., that $F(x)$ has the same signal strength as the latent Gaussian variable.

We simulate $100$ times training data sets of size $n$ and two test data sets each also of size $n$. All models are trained on the training data and evaluated on the test data. We use a sample size of $n=5000$ for the grouped random effects with $m=500$ different groups. This corresponds to a categorical variable with $500$ different categories and $10$ samples per category. For the Gaussian process model, we use a sample size of $n=500$. The reason for using a smaller sample size is that this allows us to avoid any additional approximation error due to a large data approximation. In every simulation run, two test data sets, denoted as ``interpolation" and ``extrapolation" test sets, are generated as follows. For the grouped random effects model, the interpolation test data set consists of $n$ samples from the same $m$ groups as in the training data, and the extrapolation test data consists of $n$ samples for $m$ new groups that have not been observed in the training data. For the Gaussian process model, training data locations are samples uniformly from $[0,1]^2$ excluding $[0.5,1]^2$, the interpolation test data sets are obtained by also simulating locations uniformly in the same area, and the extrapolation test data contains locations sampled uniformly from the excluded square $[0.5,1]^2$. Figure \ref{Train_test_locs} illustrates this.

\begin{figure}[ht!]
	\ifTwoColumn
	\begin{center}
		\includegraphics[width=0.5\textwidth]{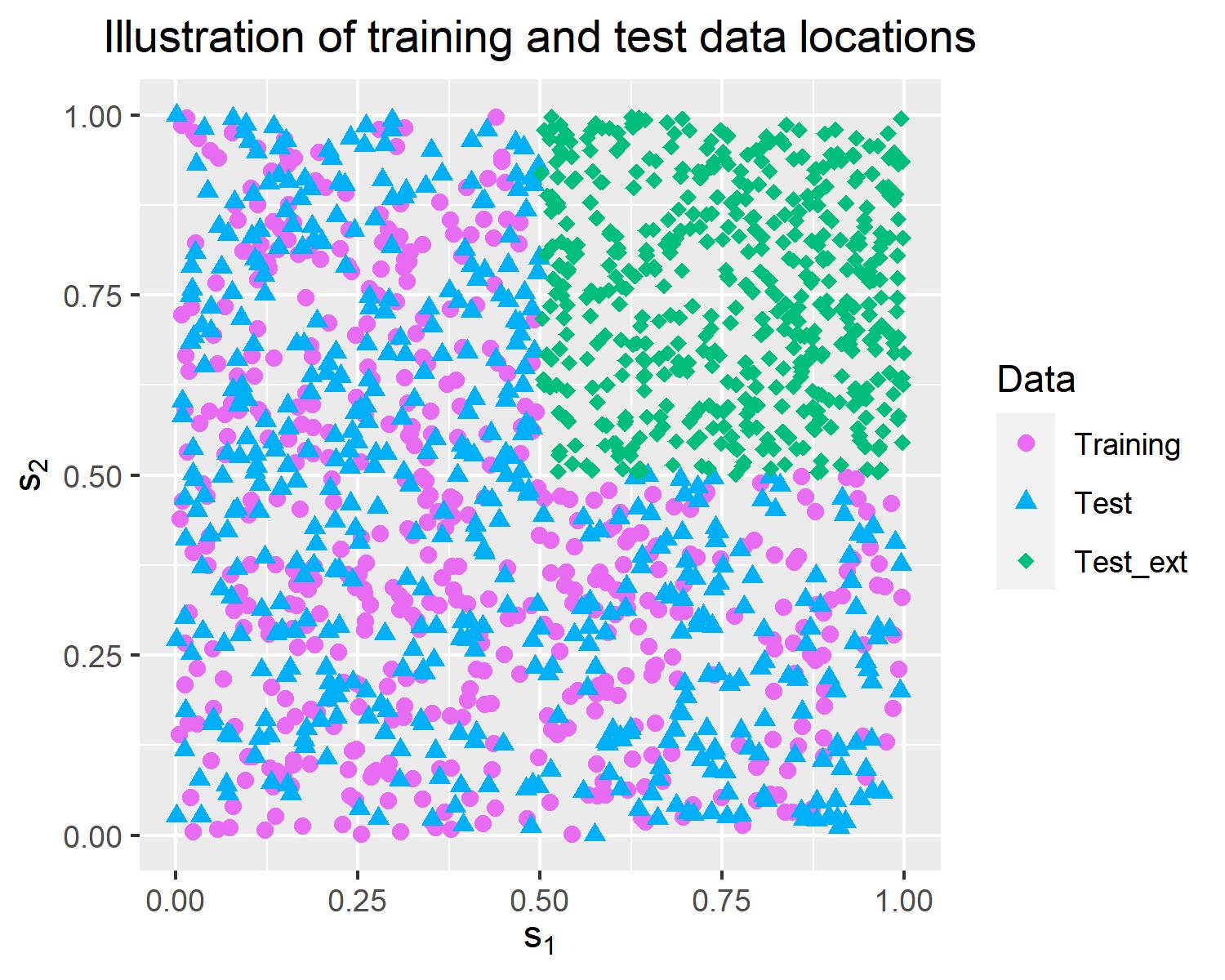}
		\caption{Example of locations for training and test data for the spatial data. `Test' and `Test\_ext' refers to locations of the interpolation and extrapolation test data sets, respectively.}
		\label{Train_test_locs}
	\end{center}
	\else
	\centering
	\includegraphics[width=0.7\textwidth]{Train_test_locs_no_sum.jpeg}
	\caption{Example of locations for training and test data for the spatial data. `Test' and `Test\_ext' refers to locations of the interpolation and extrapolation test data sets, respectively.}
	\label{Train_test_locs}
	\fi
\end{figure}

We compare the LaGaBoost algorithm based on the Laplace approximation to the following alternative approaches: linear Gaussian process and grouped random effects models for binary data with a probit link function and $F(x)=x^T\beta, \beta\in\mathbb{R}^p$, independent Newton boosting for binary data with the log loss (`LogitBoost') \citep{friedman2000additive}, and model-based gradient boosting (`mboost') \citep{hothorn2010model} with the log loss, i.e., a negative Bernoulli log-likelihood, and a probit link function. For the LogitBoost algorithm, we include the locations for the spatial data and the categorical grouping variable as additional predictor variables in the function $F(\cdot)$. For all boosting algorithms, we use trees as base learners, except for the grouped and spatial random effects in mboost. \ifNotAnonymous Learning and prediction with the LaGaBoost and LaGaBoostOOS algorithms, the linear latent Gaussian models, and LogitBoost is done using the \texttt{GPBoost} library version 0.7.0 compiled with the MSVC compiler version 19.24.28315.0 and OpenMP version 2.0. \else We use the \texttt{LightGBM} library version 2.3.0 for LogitBoost, and linear latent Gaussian models are estimated using the same C++ code as used for the LaGaBoost algorithm. \fi For the linear latent Gaussian models, the LaGaBoost algorithm, and the LaGaBoostOOS algorithm, optima for hyperparameters $\theta$ are found using Nesterov accelerated gradient descent. Further, for the linear models, the coefficients $\beta$ are also learned using Nesterov accelerated gradient descent. Note that the gradient of $L^{A}(y,F,\theta,\xi)$ with respect to $\beta$ is given by
$$\frac{\partial L^{A}(y,F,\theta,\xi)}{\partial \beta} = X^T\frac{\partial L^{A}(y,F,\theta,\xi)}{\partial F}.$$
For LogitBoost applied to the grouped random effects data, we consider the grouping variable as a numeric variable and not as a categorical variable as suggested by the authors of \texttt{LightGBM}\footnote{\url{https://lightgbm.readthedocs.io/en/latest/Advanced-Topics.html#categorical-feature-support} (retrieved on May 11, 2021)} since the number of categories is large. Concerning the mboost algorithm, we use the \texttt{mboost} R package \citep{mboost14} version 2.9-2, where spatial effects are modeled using bivariate P-spline base learner (\texttt{bspatial} with \texttt{df=6}), grouped random effects are modeled using random effects base learners (\texttt{brandom} with \texttt{df=4}), and all other predictor variables are modeled using trees as base learners. All calculations are done on a laptop with a $2.9$ GHz quad-core processor and $16$ GB of random-access memory (RAM).

Tuning parameters are chosen by simulating $10$ additional training and test sets and choosing the parameter combinations that minimize the average log loss on the test sets. In doing so, we use the union of both the interpolation and extrapolation test data sets to calculate test losses. For all boosting algorithms, we consider the following grid of tuning parameters: the number of boosting iterations $M\in \{1,\dots,1000\}$, the learning rate $\nu \in \{0.1,0.05,0.01\}$, the maximal depth of the trees $\in \{1,2,5,10\}$, and the minimal number of samples per leaf $\in \{1,10,100\}$. 

The results for the grouped and spatial data are reported in Tables \ref{results_hajjem_one_way} and \ref{results_hajjem_spatial}. We report average test error rates (`Error') and test log losses (`NegLL') for both the interpolation and extrapolation (`\_ext') test sets. Further, we calculate p-values of paired t-tests comparing the LaGaBoost algorithm to the other approaches. We find that the LaGaBoost algorithm significantly outperforms all alternative approaches in all prediction accuracy measures for both the grouped and spatial data. In Tables \ref{results_hajjem_one_way} and \ref{results_hajjem_spatial}, we additionally report the results for the LaGaBoostOOS algorithm, root mean square errors (RMSEs) and biases for the hyperparameters, and wall-clock time. Overall, we observe no large differences between the LaGaBoost and the LaGaBoostOOS algorithms. However, for the spatial data, the hyperparameter estimates of the LaGaBoostOOS algorithm have smaller RMSEs and biases compared to the LaGaBoost algorithm in line with our arguments laid out in Section \ref{lagaboostoost}. As expected, the LaGaBoostOOS algorithm has a higher computational time.

\ifTwoColumn
\begin{table}[ht!]
\centering
\caption{Results for the grouped / high-cardinality categorical variable data and a binary Bernoulli likelihood. The results for the ``extrapolation" test data sets are denoted by `\_ext'. `LaGaBst', `LogitBst', and `LGBOOS' denote
                              the LaGaBoost, LogitBoost, and LaGaBoostOOS algorithms. 
                              `NegLL' denotes the negative log-likelihood (=log loss for binary data).} 
\label{results_hajjem_one_way}
\begingroup\footnotesize
\scalebox{0.9}{
\begin{tabular}{rllll|l}
  \hline
\hline
  & LaGaBst & LinearME & LogitBst & mboost & LGBOOS \\ 
  \hline
Error & \bf{0.2373} & 0.287 & 0.3393 & 0.2825 & 0.2373 \\ 
  (sd) & (0.00674) & (0.00885) & (0.00871) & (0.00958) & (0.00677) \\ 
  \lbrack p-val\rbrack &  & [1e-86] & [2.8e-106] & [6.4e-73] & [0.66] \\ 
  NegLL & \bf{2421} & 2785 & 3030 & 2917 & 2421 \\ 
  (sd) & (45.5) & (46.1) & (28.8) & (19.7) & (45.6) \\ 
  \lbrack p-val\rbrack &  & [3.1e-105] & [8.2e-114] & [6.5e-111] & [0.67] \\ 
   \hline
Error\_ext & \bf{0.3432} & 0.4206 & 0.3533 &  & 0.3434 \\ 
  (sd) & (0.00793) & (0.00898) & (0.015) &  & (0.00783) \\ 
  \lbrack p-val\rbrack &  & [1.9e-95] & [9.2e-12] &  & [0.31] \\ 
  NegLL\_ext & \bf{3028} & 3295 & 3078 &  & 3029 \\ 
  (sd) & (31.4) & (17.9) & (55) &  & (31.7) \\ 
  \lbrack p-val\rbrack &  & [1.2e-105] & [7.5e-18] &  & [0.0036] \\ 
   \hline
RMSE $\sigma^2$ & 0.2099 & 0.3589 &  &  & 0.2141 \\ 
  Bias $\sigma^2$ & -0.1953 & -0.3536 &  &  & -0.1941 \\ 
   \hline
Time (s) & 0.6646 & 0.03906 & 0.07785 & 16.96 & 3.276 \\ 
   \hline
\hline
\end{tabular}
}
\endgroup
\end{table}

\begin{table}[ht!]
\centering
\caption{Results for the spatial data and a binary Bernoulli likelihood. See the caption of Table \ref{results_hajjem_one_way} 
                                for information on the abbreviations used in this table.} 
\label{results_hajjem_spatial}
\begingroup\footnotesize
\scalebox{0.9}{
\begin{tabular}{rllll|l}
  \hline
\hline
  & LaGaBst & LinearGP & LogitBst & mboost & LGBOOS \\ 
  \hline
Error & \bf{0.3085} & 0.3309 & 0.3501 & 0.3808 & 0.3068 \\ 
  (sd) & (0.0286) & (0.0278) & (0.0293) & (0.0336) & (0.027) \\ 
  \lbrack p-val\rbrack &  & [1.4e-22] & [6e-30] & [8.6e-41] & [0.057] \\ 
  NegLL & \bf{290.5} & 302.4 & 312.1 & 330.3 & 288.6 \\ 
  (sd) & (13.3) & (13.5) & (9.72) & (6.32) & (13.5) \\ 
  \lbrack p-val\rbrack &  & [8.8e-31] & [8.8e-40] & [6.2e-55] & [7.3e-16] \\ 
   \hline
Error\_ext & \bf{0.3755} & 0.3953 & 0.3986 & 0.4283 & 0.3732 \\ 
  (sd) & (0.0419) & (0.0306) & (0.055) & (0.0662) & (0.0396) \\ 
  \lbrack p-val\rbrack &  & [6.8e-07] & [3.3e-07] & [2.4e-14] & [0.044] \\ 
  NegLL\_ext & \bf{320} & 328.3 & 331.4 & 339.7 & 319.1 \\ 
  (sd) & (15.8) & (11.3) & (21) & (10.9) & (15.8) \\ 
  \lbrack p-val\rbrack &  & [8.9e-10] & [7.9e-11] & [7e-26] & [0.00027] \\ 
   \hline
RMSE $\sigma^2$ & 0.6237 & 0.4314 &  &  & 0.4944 \\ 
  RMSE $\rho$ & 0.1001 & 0.06225 &  &  & 0.05211 \\ 
  Bias $\sigma^2$ & -0.5945 & -0.3353 &  &  & -0.4398 \\ 
  Bias $\rho$ & 0.06441 & 0.02141 &  &  & 0.01426 \\ 
   \hline
Time (s) & 14.32 & 1.908 & 0.03377 & 0.7835 & 40.83 \\ 
   \hline
\hline
\end{tabular}
}
\endgroup
\end{table}

\else
\begin{table}[ht!]
\centering
\begingroup\footnotesize
\scalebox{0.9}{
\begin{tabular}{rllll|l}
  \hline
\hline
  & LaGaBoost & LinearME & LogitBoost & mboost & LaGaBoostOOS \\ 
  \hline
Error & \bf{0.2373} & 0.287 & 0.3393 & 0.2825 & 0.2373 \\ 
  (sd) & (0.00674) & (0.00885) & (0.00871) & (0.00958) & (0.00677) \\ 
  \lbrack p-val\rbrack &  & [1e-86] & [2.8e-106] & [6.4e-73] & [0.66] \\ 
  Log\_loss & \bf{2421} & 2785 & 3030 & 2917 & 2421 \\ 
  (sd) & (45.5) & (46.1) & (28.8) & (19.7) & (45.6) \\ 
  \lbrack p-val\rbrack &  & [3.1e-105] & [8.2e-114] & [6.5e-111] & [0.67] \\ 
   \hline
Error\_ext & \bf{0.3432} & 0.4206 & 0.3533 &  & 0.3434 \\ 
  (sd) & (0.00793) & (0.00898) & (0.015) &  & (0.00783) \\ 
  \lbrack p-val\rbrack &  & [1.9e-95] & [9.2e-12] &  & [0.31] \\ 
  Log\_loss\_ext & \bf{3028} & 3295 & 3078 &  & 3029 \\ 
  (sd) & (31.4) & (17.9) & (55) &  & (31.7) \\ 
  \lbrack p-val\rbrack &  & [1.2e-105] & [7.5e-18] &  & [0.0036] \\ 
   \hline
RMSE\_sigma2 & 0.2099 & 0.3589 &  &  & 0.2141 \\ 
  Bias\_sigma2 & -0.1953 & -0.3536 &  &  & -0.1941 \\ 
   \hline
Time (s) & 0.6646 & 0.03906 & 0.07785 & 16.96 & 3.276 \\ 
   \hline
\hline
\end{tabular}
}
\endgroup
\caption{Results for the grouped / high-cardinality categorical variable data and a binary Bernoulli likelihood. The results for the ``extrapolation" test data sets are denoted by `\_ext'.} 
\label{results_hajjem_one_way}
\end{table}

\begin{table}[ht!]
\centering
\begingroup\footnotesize
\scalebox{0.9}{
\begin{tabular}{rllll|l}
  \hline
\hline
  & LaGaBoost & LinearGP & LogitBoost & mboost & LaGaBoostOOS \\ 
  \hline
Error & \bf{0.3085} & 0.3309 & 0.3501 & 0.3808 & 0.3068 \\ 
  (sd) & (0.0286) & (0.0278) & (0.0293) & (0.0336) & (0.027) \\ 
  \lbrack p-val\rbrack &  & [1.4e-22] & [6e-30] & [8.6e-41] & [0.057] \\ 
  Log\_loss & \bf{290.5} & 302.4 & 312.1 & 330.3 & 288.6 \\ 
  (sd) & (13.3) & (13.5) & (9.72) & (6.32) & (13.5) \\ 
  \lbrack p-val\rbrack &  & [8.8e-31] & [8.8e-40] & [6.2e-55] & [7.3e-16] \\ 
   \hline
Error\_ext & \bf{0.3755} & 0.3953 & 0.3986 & 0.4283 & 0.3732 \\ 
  (sd) & (0.0419) & (0.0306) & (0.055) & (0.0662) & (0.0396) \\ 
  \lbrack p-val\rbrack &  & [6.8e-07] & [3.3e-07] & [2.4e-14] & [0.044] \\ 
  Log\_loss\_ext & \bf{320} & 328.3 & 331.4 & 339.7 & 319.1 \\ 
  (sd) & (15.8) & (11.3) & (21) & (10.9) & (15.8) \\ 
  \lbrack p-val\rbrack &  & [8.9e-10] & [7.9e-11] & [7e-26] & [0.00027] \\ 
   \hline
RMSE\_sigma2 & 0.6237 & 0.4314 &  &  & 0.4944 \\ 
  RMSE\_rho & 0.1001 & 0.06225 &  &  & 0.05211 \\ 
  Bias\_sigma2 & -0.5945 & -0.3353 &  &  & -0.4398 \\ 
  Bias\_rho & 0.06441 & 0.02141 &  &  & 0.01426 \\ 
   \hline
Time (s) & 14.32 & 1.908 & 0.03377 & 0.7835 & 40.83 \\ 
   \hline
\hline
\end{tabular}
}
\endgroup
\caption{Results for the spatial data and a binary Bernoulli likelihood.} 
\label{results_hajjem_spatial}
\end{table}

\fi

An alternative option to simulating additional training and test sets for choosing tuning parameters is to use cross-validation on the training data sets in every of the $100$ simulation runs. However, this is computationally more expensive as the number of simulation runs is relatively large. To investigate the differences between these two options for choosing tuning parameters, we redo the simulated experiments with $10$ simulation runs and choose tuning parameters using $4$-fold cross-validation on the training data in every simulation run. The results of this are reported in Tables \ref{results_hajjem_one_way_CV} and \ref{results_hajjem_spatial_CV} in the appendix. Overall, we observe only minor differences.

Next, we also perform the same simulated experiments using a Poisson likelihood with a logarithmic link function instead of a binary Bernoulli likelihood. Specifically, we simulate grouped and spatial random effects as described above with $\sigma^2=0.2$, and we simulate $F(X)$ according to \eqref{simF} with $C_1$ chosen as described above and $C_2$ chosen such that the variance of $F(X)$ is approximately $0.2$. Response variable data is then simulated from a Poisson distribution with mean equaling $\exp(F(X)+Zb)$. Tuning parameters are chosen similarly as for the binary data by minimizing the test negative Poisson likelihood. Further, we use the RMSE and the negative Poisson likelihood for evaluating prediction accuracy. The results of this are reported in Tables \ref{results_hajjem_one_way_poisson} and \ref{results_hajjem_spatial_poisson}. We find qualitatively very similar results as for the binary data. In particular, the LaGaBoost algorithm significantly outperforms all alternative approaches in all prediction accuracy measures for both the grouped and the spatial data.

\ifTwoColumn
\begin{table}[ht!]
\centering
\caption{Results for the grouped / high-cardinality categorical variable data and a Poisson likelihood. `PoisBst' denotes independent Newton boosting 
                                    with a Poisson likelihood. See the caption of Table 
                                    \ref{results_hajjem_one_way} for information 
                                    on the abbreviations used in this table.} 
\label{results_hajjem_one_way_poisson}
\begingroup\footnotesize
\scalebox{0.9}{
\begin{tabular}{rllll|l}
  \hline
\hline
  & LaGaBst & LinearME & PoisBst & mboost & LGBOOS \\ 
  \hline
RMSE & \bf{1.465} & 1.614 & 1.575 & 1.528 & 1.445 \\ 
  (sd) & (0.514) & (0.531) & (0.503) & (0.51) & (0.5) \\ 
  \lbrack p-val\rbrack &  & [2.5e-12] & [4e-55] & [9.6e-09] & [1.3e-05] \\ 
  NegLL & \bf{7062} & 7504 & 7515 & 7293 & 7033 \\ 
  (sd) & (215) & (285) & (235) & (239) & (179) \\ 
  \lbrack p-val\rbrack &  & [3.1e-62] & [6.8e-77] & [2.9e-39] & [8.8e-07] \\ 
   \hline
RMSE\_ext & \bf{1.516} & 1.599 & 1.536 &  & 1.504 \\ 
  (sd) & (0.482) & (0.481) & (0.481) &  & (0.468) \\ 
  \lbrack p-val\rbrack &  & [3.9e-75] & [1.1e-07] &  & [0.0066] \\ 
  NegLL\_ext & \bf{7488} & 7884 & 7546 &  & 7466 \\ 
  (sd) & (212) & (286) & (218) &  & (181) \\ 
  \lbrack p-val\rbrack &  & [1.2e-66] & [1.3e-14] &  & [7.9e-05] \\ 
   \hline
RMSE $\sigma^2$ & 0.05751 & 0.02506 &  &  & 0.04392 \\ 
  Bias $\sigma^2$ & -0.05382 & -0.00339 &  &  & -0.03661 \\ 
   \hline
Time (s) & 0.3154 & 0.02697 & 0.0606 & 8.992 & 5.368 \\ 
   \hline
\hline
\end{tabular}
}
\endgroup
\end{table}

\begin{table}[ht!]
\centering
\caption{Results for the spatial data and a Poisson likelihood. See the captions of Tables 
                                    \ref{results_hajjem_one_way} and \ref{results_hajjem_one_way_poisson} 
                                    for information on the abbreviations used in this table.} 
\label{results_hajjem_spatial_poisson}
\begingroup\footnotesize
\scalebox{0.9}{
\begin{tabular}{rllll|l}
  \hline
\hline
  & LaGaBst & LinearGP & PoisBst & mboost & LGBOOS \\ 
  \hline
RMSE & \bf{1.415} & 1.465 & 1.452 & 1.494 & 1.421 \\ 
  (sd) & (0.324) & (0.324) & (0.322) & (0.33) & (0.323) \\ 
  \lbrack p-val\rbrack &  & [4.7e-24] & [1.2e-17] & [4.6e-28] & [7.7e-05] \\ 
  NegLL & \bf{747.2} & 767 & 764.1 & 787.5 & 749.6 \\ 
  (sd) & (53.5) & (56.5) & (55) & (62.2) & (53) \\ 
  \lbrack p-val\rbrack &  & [2.4e-30] & [9.2e-22] & [1.9e-34] & [4.5e-05] \\ 
   \hline
RMSE\_ext & \bf{1.505} & 1.528 & 1.525 & 1.554 & 1.507 \\ 
  (sd) & (0.624) & (0.623) & (0.623) & (0.621) & (0.622) \\ 
  \lbrack p-val\rbrack &  & [2.7e-09] & [1e-08] & [2.1e-21] & [0.17] \\ 
  NegLL\_ext & \bf{773.4} & 785.6 & 783.1 & 799.1 & 774.2 \\ 
  (sd) & (94) & (96.1) & (95.8) & (92.1) & (91.1) \\ 
  \lbrack p-val\rbrack &  & [4.6e-09] & [2.4e-09] & [3.8e-24] & [0.36] \\ 
   \hline
RMSE $\sigma^2$ & 0.09264 & 0.1217 &  &  & 0.1256 \\ 
  RMSE $\rho$ & 0.06708 & 0.06526 &  &  & 0.06721 \\ 
  Bias $\sigma^2$ & -0.06304 & 0.08664 &  &  & 0.08688 \\ 
  Bias $\rho$ & -0.003803 & -0.05122 &  &  & -0.05566 \\ 
   \hline
Time (s) & 10.16 & 2.022 & 0.04036 & 0.689 & 26.77 \\ 
   \hline
\hline
\end{tabular}
}
\endgroup
\end{table}

\else
\begin{table}[ht!]
\centering
\begingroup\footnotesize
\scalebox{0.9}{
\begin{tabular}{rllll|l}
  \hline
\hline
  & LaGaBoost & LinearME & PoisonBoost & mboost & LaGaBoostOOS \\ 
  \hline
RMSE & \bf{1.465} & 1.614 & 1.575 & 1.528 & 1.445 \\ 
  (sd) & (0.514) & (0.531) & (0.503) & (0.51) & (0.5) \\ 
  \lbrack p-val\rbrack &  & [2.5e-12] & [4e-55] & [9.6e-09] & [1.3e-05] \\ 
  Neg\_log\_lik & \bf{7062} & 7504 & 7515 & 7293 & 7033 \\ 
  (sd) & (215) & (285) & (235) & (239) & (179) \\ 
  \lbrack p-val\rbrack &  & [3.1e-62] & [6.8e-77] & [2.9e-39] & [8.8e-07] \\ 
   \hline
RMSE\_ext & \bf{1.516} & 1.599 & 1.536 &  & 1.504 \\ 
  (sd) & (0.482) & (0.481) & (0.481) &  & (0.468) \\ 
  \lbrack p-val\rbrack &  & [3.9e-75] & [1.1e-07] &  & [0.0066] \\ 
  Neg\_log\_lik\_ext & \bf{7488} & 7884 & 7546 &  & 7466 \\ 
  (sd) & (212) & (286) & (218) &  & (181) \\ 
  \lbrack p-val\rbrack &  & [1.2e-66] & [1.3e-14] &  & [7.9e-05] \\ 
   \hline
RMSE\_sigma2 & 0.05751 & 0.02506 &  &  & 0.04392 \\ 
  Bias\_sigma2 & -0.05382 & -0.00339 &  &  & -0.03661 \\ 
   \hline
Time (s) & 0.3154 & 0.02697 & 0.0606 & 8.992 & 5.368 \\ 
   \hline
\hline
\end{tabular}
}
\endgroup
\caption{Results for the grouped / high-cardinality categorical variable data and a Poisson likelihood.} 
\label{results_hajjem_one_way_poisson}
\end{table}

\begin{table}[ht!]
\centering
\begingroup\footnotesize
\scalebox{0.9}{
\begin{tabular}{rllll|l}
  \hline
\hline
  & LaGaBoost & LinearGP & PoisonBoost & mboost & LaGaBoostOOS \\ 
  \hline
RMSE & \bf{1.415} & 1.465 & 1.452 & 1.494 & 1.421 \\ 
  (sd) & (0.324) & (0.324) & (0.322) & (0.33) & (0.323) \\ 
  \lbrack p-val\rbrack &  & [4.7e-24] & [1.2e-17] & [4.6e-28] & [7.7e-05] \\ 
  Neg\_log\_lik & \bf{747.2} & 767 & 764.1 & 787.5 & 749.6 \\ 
  (sd) & (53.5) & (56.5) & (55) & (62.2) & (53) \\ 
  \lbrack p-val\rbrack &  & [2.4e-30] & [9.2e-22] & [1.9e-34] & [4.5e-05] \\ 
   \hline
RMSE\_ext & \bf{1.505} & 1.528 & 1.525 & 1.554 & 1.507 \\ 
  (sd) & (0.624) & (0.623) & (0.623) & (0.621) & (0.622) \\ 
  \lbrack p-val\rbrack &  & [2.7e-09] & [1e-08] & [2.1e-21] & [0.17] \\ 
  Neg\_log\_lik\_ext & \bf{773.4} & 785.6 & 783.1 & 799.1 & 774.2 \\ 
  (sd) & (94) & (96.1) & (95.8) & (92.1) & (91.1) \\ 
  \lbrack p-val\rbrack &  & [4.6e-09] & [2.4e-09] & [3.8e-24] & [0.36] \\ 
   \hline
RMSE\_sigma2 & 0.09264 & 0.1217 &  &  & 0.1256 \\ 
  RMSE\_rho & 0.06708 & 0.06526 &  &  & 0.06721 \\ 
  Bias\_sigma2 & -0.06304 & 0.08664 &  &  & 0.08688 \\ 
  Bias\_rho & -0.003803 & -0.05122 &  &  & -0.05566 \\ 
   \hline
Time (s) & 10.16 & 2.022 & 0.04036 & 0.689 & 26.77 \\ 
   \hline
\hline
\end{tabular}
}
\endgroup
\caption{Results for the spatial data and a Poisson likelihood.} 
\label{results_hajjem_spatial_poisson}
\end{table}

\fi

\subsection{When Does the LaGaBoost Algorithm Outperform Independent Boosting?}\label{simul_ext}
It is relatively obvious that the LaGaBoost algorithm tends to outperform linear latent Gaussian models when there are non-linearities and interactions. It is less clear in which situations the LaGaBoost algorithm outperforms classical boosting algorithms which include categorical variables and/or spatial locations in the predictor variables $X$ for $F(\cdot)$ and, conditionally on this, assume independence among samples. As mentioned in Section \ref{priorsreg}, intuitively, we expect that the improvement in prediction accuracy of our novel approach over independent tree-boosting is the larger, the smaller the number of observations per category of a categorical variable is and the faster the covariance decays over space and/or time. To analyze this, we repeat the above simulated experiments for the binary data with varying numbers of samples per group and varying range parameters $\rho$. Specifically, for the grouped random effects with $n=5000$ samples, we consider the following number of samples per group: $10$, $20$, $50$, $100$, and $200$. For the spatial data, we consider the following range parameters $\rho$: $0.1$, $0.2$, $0.5$, and $1$. Apart from this, we use the same experimental setup as above for the Bernoulli likelihood. 

Figure \ref{simul_compare} reports the relative decrease in the test error of the LaGaBoost algorithm compared to the LogitBoost algorithm as well as the average test error of the two algorithms for the interpolation test data sets. These results confirm our hypothesis that the improvement in prediction accuracy is the larger, the smaller the number of observations per group is and the faster the covariance decays over space. In other words, the higher the complexity of the underlying true function relative to the sample size, the larger is the improvement obtained by the LaGaBoost algorithm. We conjecture that this is not just due to more accurate learning of the random effects themselves but also more efficient learning of the remaining part of the predictor function $F(\cdot)$. Figure \ref{simul_compare} also shows that, as expected, average test errors of both algorithms decrease when having fewer categories for the categorical grouping variables and when the correlation decays slower of space.

\begin{figure}[ht!]
	\begin{center}
		\ifTwoColumn
		\includegraphics[width=0.23\textwidth]{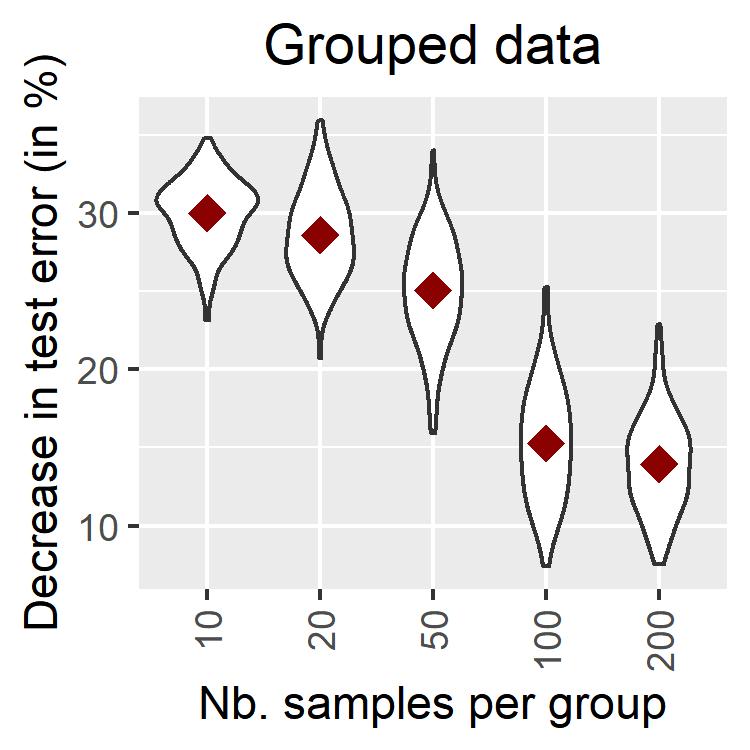}
		\includegraphics[width=0.23\textwidth]{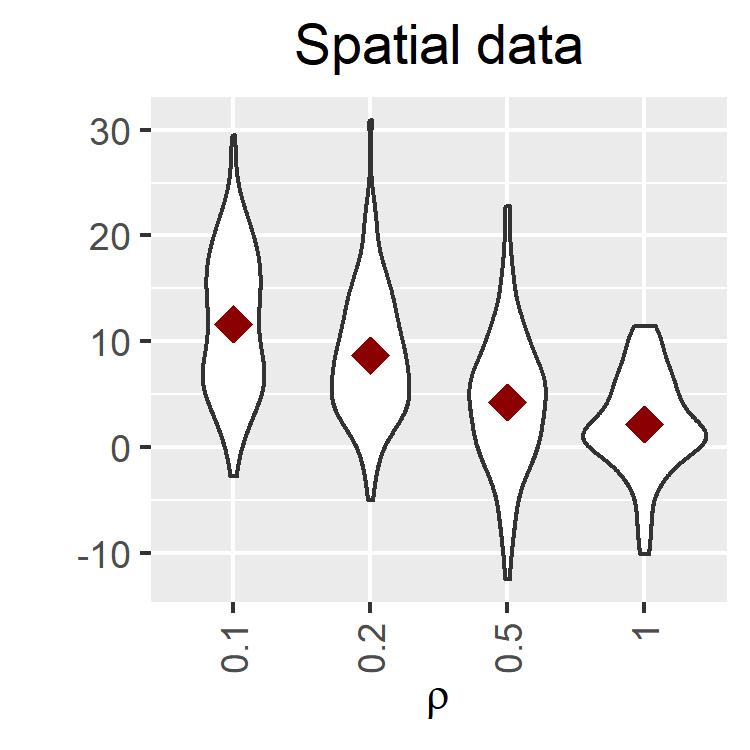}
		\else
		\includegraphics[width=0.4\textwidth]{simul_compare_one_way.jpeg}
		\includegraphics[width=0.4\textwidth]{simul_compare_spatial.jpeg}
		\fi
		\ifTwoColumn
		\includegraphics[width=0.46\textwidth]{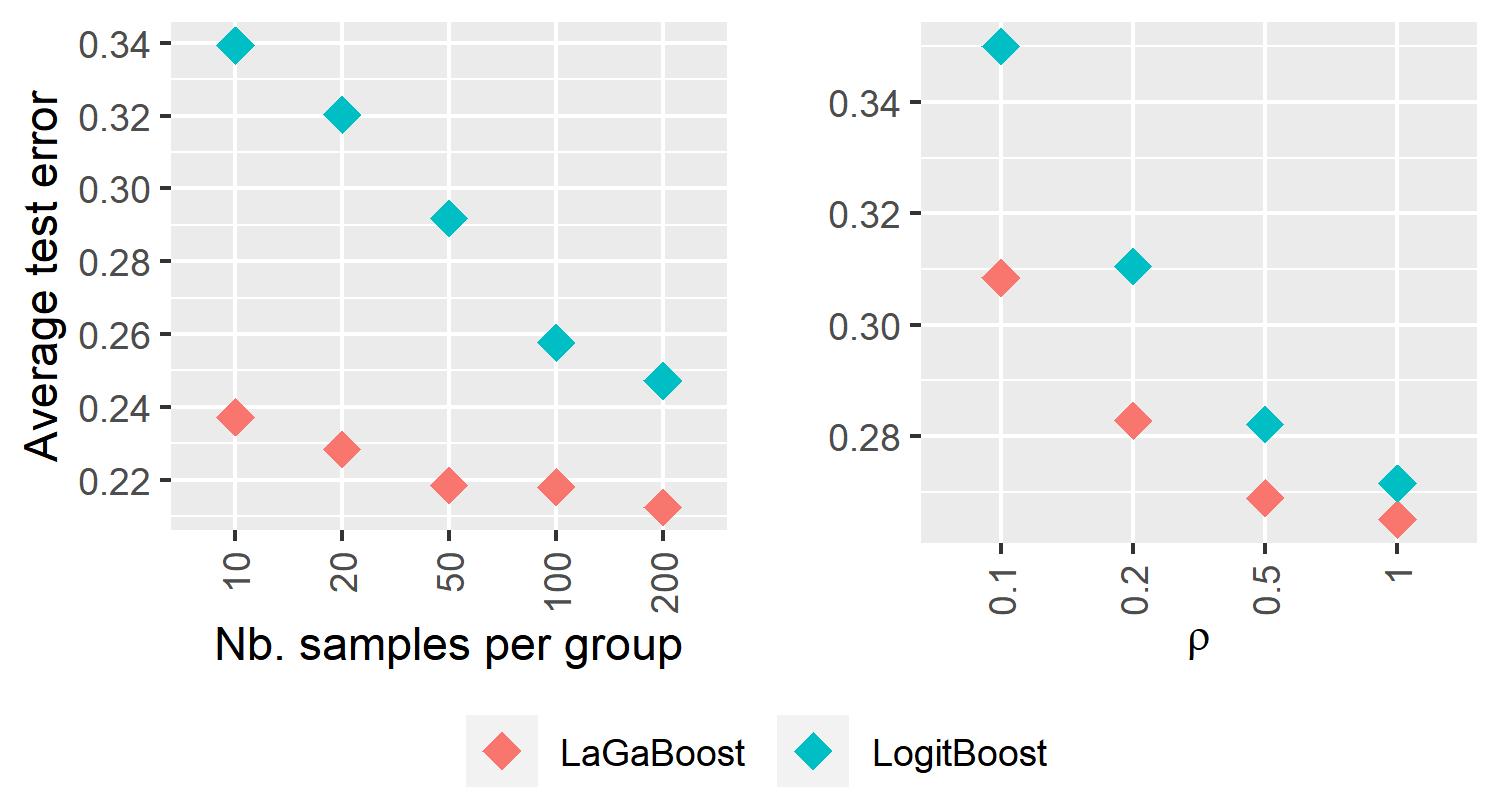}
		\else
		\includegraphics[width=0.8\textwidth]{simul_compare_raw.jpeg}
		\fi
		\caption{Comparison of the LaGaBoost and LogitBoost algorithms for grouped data with varying number of samples per group and spatial data with varying range parameters $\rho$. The top row shows the relative decrease in test error of the LaGaBoost vs. the LogitBoost algorithm visualized using violin plots. The red rhombi representing means over the simulation runs. The bottom row shows the average test error of the two algorithms.}
		\label{simul_compare} 
	\end{center}
\end{figure}

\section{Real-world Applications}\label{data_appl}
In the following, we apply the LaGaBoost algorithm to two real-world binary classification data sets and compare its prediction accuracy to alternative approaches. We consider both grouped data with a high-cardinality categorical variable and a spatial data set. For the former, we consider data on poverty among young females in the US collected by the National Longitudinal Study of Youth (NLSY) and available from \url{https://www3.nd.edu/~rwilliam/statafiles/teenpovxt.dta}. Here, the person id number is the high-cardinality categorical variable that determines the grouped random effects, and the goal is to predict the binary poverty indicator. As a spatial data set, we consider species distribution data. Specifically, we use presence-absence data on rainforest understorey vascular plants in North-east New South Wales, Australia, species ``nsw43" obtained from the \texttt{disdat} R package \citep{elith2020presence}. The goal is to predict the presence or absence of the species. Table \ref{data_sum} summarizes the data sets.

\ifTwoColumn
\begin{table}
	\centering
	\caption{Summary of real-world data sets. `\# data' denotes the sample size, `\# features' the number of predictor variables, and `\# cat.' the number of groups of the high-cardinality categorical variable.} 
	\label{data_sum}
	\begin{tabular}{llrrrr}
		\hline
		\hline
		Name & Data type & \# data & Freq. of $1$'s& \# features & \# cat. \\ 
		\hline
		Poverty & Grouped & 5755 & 36.79 \% & 7 + 1 & 1151\\ 
		Species &  Spatial & 909 & 23.76 \% &  13 + 2 &  \\ 
		\hline
		\hline
	\end{tabular}
\end{table}
\else
\begin{table}[ht!]
	\centering
	\begin{tabular}{llrrrr}
		\hline
		\hline
		Name & Data type & Nb. samples & Freq. of $1$'s& Nb. features & Nb. categories \\ 
		\hline
		Poverty & Grouped  & 5755 & 36.79 \% & 7 + 1 & 1151\\ 
		Species &  Spatial & 909 & 23.76 \% &  13 + 2 &  \\ 
		\hline
		\hline
	\end{tabular}
	\caption{Summary of real-world data sets.} 
\label{data_sum}
\end{table}
\fi

We compare the LaGaBoost algorithm to the same alternative approaches as in the simulated experiments in Section \ref{simul} using nested $4$-fold cross-validation. For the grouped poverty data, we perform stratified cross-validation such that every fold contains approximately the same amount of data for every category of the grouping variable. A reason for doing this is that one of the alternative approaches, the \texttt{mboost} R package, does not allow for making predictions for unobserved groups. Tuning parameters are chosen by doing an additional inner $4$-fold cross-validation on every of the four training data sets.\footnote{Note that one has to be careful when doing cross-validation for dependent data to avoid biased estimates of the generalization error as pointed out by, e.g., \citet{rabinowicz2020cross}. However, apart from the fact that the validation and test data sets are of slightly different sizes, our cross-validation setting preserves the distributional relation between the inner fold training and validation data sets and the training and test data sets and, consequently, no bias is introduced.} We consider the same set of tuning parameters and selection criterion as in the simulated experiments

The results are reported in Table \ref{results_real_short}. In addition to the test error and the test log loss, we also report the test area under the ROC curve (AUC). We find that the LaGaBoost algorithm outperforms all alternative methods in all three prediction accuracy metrics for both the grouped data with a high-cardinality categorical variable and the spatial data.

\ifTwoColumn
\begin{table}[ht!]
\centering
\caption{Results for the real-world data sets. 
  `Linear' denotes the linear grouped mixed effects and linear Gaussian process models.} 
\label{results_real_short}
\begingroup\footnotesize
\scalebox{1}{
\begin{tabular}{rllll}
  \hline
\hline
  & LaGaBoost & Linear & LogitBoost & mboost \\ 
  \hline\multicolumn{5}{l}{Poverty (grouped / high-cardinality categorical data)}\\\hline Error & \bf{0.2792} & 0.2848 & 0.3239 & 0.3197 \\ 
  AUC & \bf{0.7318} & 0.7313 & 0.7001 & 0.7028 \\ 
  Log\_loss & \bf{0.5789} & 0.5878 & 0.6013 & 0.6081 \\ 
   \hline\multicolumn{5}{l}{Species (spatial data)}\\\hline Error & \bf{0.2365} & 0.3003 & 0.2717 & 0.2728 \\ 
  AUC & \bf{0.7383} & 0.7061 & 0.683 & 0.635 \\ 
  Log\_loss & \bf{0.4933} & 0.5625 & 0.527 & 0.5662 \\ 
   \hline
\hline
\end{tabular}
}
\endgroup
\end{table}

\else
\begin{table}[ht!]
\centering
\begingroup\footnotesize
\scalebox{0.9}{
\begin{tabular}{rllll}
  \hline
\hline
  & LaGaBoost & Linear & LogitBoost & mboost \\ 
  \hline\multicolumn{5}{l}{Poverty (grouped / high-cardinality categorical data)}\\\hline Error & \bf{0.2792} & 0.2848 & 0.3239 & 0.3197 \\ 
  AUC & \bf{0.7318} & 0.7313 & 0.7001 & 0.7028 \\ 
  Log\_loss & \bf{0.5789} & 0.5878 & 0.6013 & 0.6081 \\ 
   \hline\multicolumn{5}{l}{Species (spatial data)}\\\hline Error & \bf{0.2365} & 0.3003 & 0.2717 & 0.2728 \\ 
  AUC & \bf{0.7383} & 0.7061 & 0.683 & 0.635 \\ 
  Log\_loss & \bf{0.4933} & 0.5625 & 0.527 & 0.5662 \\ 
   \hline
\hline
\end{tabular}
}
\endgroup
\caption{Results for the real-world data sets. 
  `Linear' denotes the linear grouped mixed effects and linear Gaussian process models.} 
\label{results_real_short}
\end{table}

\fi

\section{Conclusion}
We have introduced a novel way for combining latent Gaussian models, such as Gaussian processes and random effects models, with boosting. This is done by applying functional gradient descent to the negative logarithmic marginal likelihood of a generalized mixed effects model in a boosting framework while jointly learning hyperparameters. We have obtained increased prediction accuracy compared to existing approaches in both simulated and real-world data experiments. Future research can investigate how the approximation used for the marginal likelihood impacts properties such as prediction accuracy and computational time of the LaGaBoost algorithm.

\ifTwoColumn
\section*{Acknowledgments}
We are thankful to the anonymous reviewers for their valuable comments which helped to improve the paper. This research was partially supported by the Swiss Innovation Agency - Innosuisse (grant number `55463.1 IP-IC').
\fi

\bibliography{bib_LaGaBoost}

\clearpage

\appendix

\section{Appendix}
\setcounter{table}{0}
\counterwithin{table}{section}

\subsection*{Additional Results for the Simulated Experiments}\label{add_res}

\ifTwoColumn
\begin{table}[ht!]
\centering
\caption{Results for the grouped data and a binary Bernoulli likelihood based on only 10 simulation runs 
                                and using 4-fold cross-validation on the training data for determining tuning parameters.} 
\label{results_hajjem_one_way_CV}
\begingroup\footnotesize
\scalebox{0.9}{
\begin{tabular}{rllll}
  \hline
\hline
  & LaGaBst & LinearME & LogitBst & mboost \\ 
  \hline
Error & \bf{0.2371} & 0.2878 & 0.3365 & 0.288 \\ 
  (sd) & (0.00366) & (0.0079) & (0.00859) & (0.0109) \\ 
  NegLL & \bf{2429} & 2785 & 3007 & 2921 \\ 
  (sd) & (26.9) & (39.7) & (33.9) & (21) \\ 
   \hline
Error\_ext & \bf{0.3429} & 0.4158 & 0.3656 &  \\ 
  (sd) & (0.00796) & (0.00991) & (0.0269) &  \\ 
  NegLL\_ext & \bf{3021} & 3293 & 3171 &  \\ 
  (sd) & (34.8) & (18.6) & (223) &  \\ 
   \hline
\hline
\end{tabular}
}
\endgroup
\end{table}

\begin{table}[ht!]
\centering
\caption{Results for the spatial data and a binary Bernoulli likelihood based on only 10 simulation runs 
                                and using 4-fold cross-validation on the training data for determining tuning parameters.} 
\label{results_hajjem_spatial_CV}
\begingroup\footnotesize
\scalebox{0.9}{
\begin{tabular}{rllll}
  \hline
\hline
  & LaGaBst & LinearGP & LogitBst & mboost \\ 
  \hline
Error & \bf{0.3114} & 0.3304 & 0.3534 & 0.3538 \\ 
  (sd) & (0.0321) & (0.0279) & (0.0339) & (0.026) \\ 
  NegLL & \bf{291.2} & 300.4 & 313.2 & 313.6 \\ 
  (sd) & (15.9) & (14.9) & (12.3) & (14.8) \\ 
   \hline
Error\_ext & \bf{0.389} & 0.4074 & 0.4016 & 0.439 \\ 
  (sd) & (0.0297) & (0.0396) & (0.0484) & (0.0411) \\ 
  NegLL\_ext & \bf{323.7} & 330.2 & 336.8 & 378.6 \\ 
  (sd) & (17.6) & (14.3) & (18) & (66.4) \\ 
   \hline
\hline
\end{tabular}
}
\endgroup
\end{table}

\else
\begin{table}[ht!]
\centering
\begingroup\footnotesize
\scalebox{0.9}{
\begin{tabular}{rllll}
  \hline
\hline
  & LaGaBoost & LinearME & LogitBoost & mboost \\ 
  \hline
Error & \bf{0.2371} & 0.2878 & 0.3365 & 0.288 \\ 
  (sd) & (0.00366) & (0.0079) & (0.00859) & (0.0109) \\ 
  Log\_loss & \bf{2429} & 2785 & 3007 & 2921 \\ 
  (sd) & (26.9) & (39.7) & (33.9) & (21) \\ 
   \hline
Error\_ext & \bf{0.3429} & 0.4158 & 0.3656 &  \\ 
  (sd) & (0.00796) & (0.00991) & (0.0269) &  \\ 
  Log\_loss\_ext & \bf{3021} & 3293 & 3171 &  \\ 
  (sd) & (34.8) & (18.6) & (223) &  \\ 
   \hline
\hline
\end{tabular}
}
\endgroup
\caption{Results for the grouped data and a binary Bernoulli likelihood based on only 10 simulation runs 
                                and using 4-fold cross-validation on the training data for determining tuning parameters.} 
\label{results_hajjem_one_way_CV}
\end{table}

\begin{table}[ht!]
\centering
\begingroup\footnotesize
\scalebox{0.9}{
\begin{tabular}{rllll}
  \hline
\hline
  & LaGaBoost & LinearGP & LogitBoost & mboost \\ 
  \hline
Error & \bf{0.3114} & 0.3304 & 0.3534 & 0.3538 \\ 
  (sd) & (0.0321) & (0.0279) & (0.0339) & (0.026) \\ 
  Log\_loss & \bf{291.2} & 300.4 & 313.2 & 313.6 \\ 
  (sd) & (15.9) & (14.9) & (12.3) & (14.8) \\ 
   \hline
Error\_ext & \bf{0.389} & 0.4074 & 0.4016 & 0.439 \\ 
  (sd) & (0.0297) & (0.0396) & (0.0484) & (0.0411) \\ 
  Log\_loss\_ext & \bf{323.7} & 330.2 & 336.8 & 378.6 \\ 
  (sd) & (17.6) & (14.3) & (18) & (66.4) \\ 
   \hline
\hline
\end{tabular}
}
\endgroup
\caption{Results for the spatial data and a binary Bernoulli likelihood based on only 10 simulation runs 
                                and using 4-fold cross-validation on the training data for determining tuning parameters.} 
\label{results_hajjem_spatial_CV}
\end{table}

\fi

\end{document}